\newtheorem{thm}{Theorem}
\newtheorem{lem}{Lemma}
\newtheorem{cor}{Corollary}
\newtheorem{definition}{Definition}
\definecolor{darkgreen}{rgb}{0,0.5,0}
\definecolor{purple}{rgb}{1,0,1}
\newcommand{\draftnote}[2]{\ifnum\DraftStatus=1
	\marginpar{
		\tiny\raggedright
		\hbadness=10000
        \def\baselinestretch{0.8}
        \textcolor{#1}{\textsf{\hspace{0pt}#2}}}
     \fi}
\newcommand{\tianqing}[1]{\draftnote{darkgreen}{[TQ: #1]}}
\newcommand{\dayong}[1]{\draftnote{blue}{[DY: #1]}}
\begin{document}

\title{Differentially Private Multi-Agent Planning for Logistic-like Problems}

\author{Dayong Ye, Tianqing Zhu*, Sheng Shen, Wanlei Zhou and Philip S. Yu
\thanks{*Tianqing Zhu is the corresponding author. D. Ye, T. Zhu, S. Shen and W. Zhou are with the Centre for Cyber Security and Privacy and the School of Computer Science, University of Technology, Sydney, Australia. Philip S. Yu is with the Department of Computer Science, University of Illinois at Chicago, USA. Email: \{Dayong.Ye, Tianqing.Zhu, Sheng.shen-1, Wanlei.Zhou\}@uts.edu.au, psyu@cs.uic.edu. This work is supported by two ARC Projects (LP170100123 and DP190100981) from the Australian Research Council, Australia, and also by NSF under grants III-1526499, III-1763325, III-1909323, and SaTC-1930941, USA.}}

\maketitle

\begin{abstract}
Planning is one of the main approaches used to improve agents' working efficiency by making plans beforehand.
However, during planning, agents face the risk of having their private information leaked.
This paper proposes a novel strong privacy-preserving planning approach for logistic-like problems. 
This approach outperforms existing approaches by addressing two challenges:
1) simultaneously achieving strong privacy, completeness and efficiency,
and 2) addressing communication constraints.
These two challenges are prevalent in many real-world applications
including logistics in military environments and packet routing in networks.
To tackle these two challenges, our approach adopts the differential privacy technique,
which can both guarantee strong privacy and control communication overhead.
To the best of our knowledge, this paper is the first to apply differential privacy to the field of multi-agent planning 
as a means of preserving the privacy of agents for logistic-like problems.
We theoretically prove the strong privacy and completeness of our approach
and empirically demonstrate its efficiency.
We also theoretically analyze the communication overhead of our approach
and illustrate how differential privacy can be used to control it.
\end{abstract}

\begin{IEEEkeywords}
Multi-Agent Planning, Privacy Preservation, Differential Privacy
\end{IEEEkeywords}


\section{Introduction}\label{sec:introduction}
Multi-agent planning is one of the fundamental research problems in multi-agent systems \cite{desJardins99,Ye17}.
Multi-agent planning research aims to improve agents' working efficiency by making plans in advance.
Research into collaborative multi-agent planning largely focuses on jointly automated planning \cite{Torreno17}.
During jointly automated planning, agents have to share information.
However, this kind of information sharing often results in the leaking of agents' private information.
Accordingly, to protect agents' privacy, privacy preservation is introduced into the collaborative multi-agent planning process~\cite{Shani18,Shekhar20}.
The main problem associated with privacy preservation in collaborative multi-agent planning is that of how to make plans for agents
while also preserving the privacy of each agent.

Privacy can be roughly classified into four levels:
weak privacy, strong privacy, object cardinality privacy, and agent privacy \cite{Torreno17}.
Strong privacy means that an agent, regardless of its reasoning power, cannot deduce the private information of other agents based on the information available to it.
Developing a planning method with strong privacy in distributed and communication-constrained environments is challenging for the following two reasons. 
First, it is difficult to achieve strong privacy, completeness and efficiency simultaneously \cite{Tozicka17}. 
Second, in communication-constrained environments, each agent is allowed to communicate only a limited number of times.

These two challenges are widespread in many real-world applications.
A typical application is military logistics.
In military logistics, it is vital that each military unit should strongly protect its private and sensitive facts.
Also, plans for military units must be complete and efficient to avoid any delay.
In addition, communication between units has to be constrained,
since the more communication takes place, the more likely it will be that sensitive information is leaked.

Most existing planning approaches are either weak privacy-preserving or overlook the issue of privacy preservation entirely \cite{Torreno17}.
Very few approaches are strong privacy-preserving \cite{Brafman15}.
These strong privacy-preserving planning approaches, however,
may not achieve strong privacy, completeness and efficiency simultaneously, as summarized in \cite{Tozicka17}.
Moreover, these approaches also may not work efficiently in distributed and communication-constrained environments,
as they implicitly assume that an agent can communicate directly with all other agents,
and overlook the analysis of communication overhead.

Accordingly, in this paper, we develop a novel strong privacy-preserving planning approach for distributed and communication-constrained environments. 
Our approach focuses primarily on logistic-like problems, 
which are typically used as running examples in multi-agent planning.
To achieve strong privacy, completeness and efficiency simultaneously,
we adopt the differential privacy technique.
Differential privacy is a promising privacy model,
which has been mathematically proven that
when this model is in use, an individual record being stored in or removed from a dataset makes little difference to the analytical output of the dataset \cite{Dwork06,Zhu17}.
To the best of our knowledge,
we are the first to apply differential privacy to the privacy-preserving planning problem.
Using a differential privacy mechanism to obfuscate an agent's private information can strongly preserve the agent's privacy
while also having minimal impact on the usability of the agent's private information.

Furthermore, we also address the communication-constrained environment issue by adopting the concept of a `privacy budget'.
In differential privacy, a privacy budget is applied to control privacy levels.
In our proposed approach, the privacy budget can naturally be used to control communication overhead,
with the result that only a limited number of messages are permitted during a planning phase.

In summary, the contributions of this paper are two-fold:
\begin{enumerate}
  \item Improving upon existing strong privacy-preserving planning approaches,
  our approach can achieve strong privacy, completeness and efficiency simultaneously in logistic-like problems using the differential privacy technique.
  \item Our approach is more applicable to distributed and communication-constrained logistic-like problems than existing approaches.
\end{enumerate}

The remainder of this paper is organized as follows.
In the next section, a detailed review of related work is presented. 
Then, a motivating example is given in Section \ref{sec:motivation example}.
Preliminaries are presented in Section \ref{sec:preliminaries}.
After that, the novel planning approach and the theoretical analysis are presented in Sections \ref{sec:method} and \ref{sec:theoretical analysis}, respectively.
The application of our approach to other domains is illustrated in Section \ref{sec:application}.
Next, the experimental results are provided in Section \ref{sec:experiment}.
Finally, Section \ref{sec:conclusion} concludes this paper.

\section{Related Work}\label{sec:related work}


\subsection{Weak privacy-preserving approaches}

Torreno et al. \cite{Torreno14} develop a framework known as FMAP (forward multi-agent planning).
In FMAP, agents maintain a common open list with unexplored refinement plans.
Agents then jointly select an unexplored refinement plan.
Each agent then expands the plan using a forward-chaining procedure.
Agents exchange these plans and use a distributed heuristic approach to evaluate them.
Later, based on the FMAP framework, Torreno et al. \cite{Torreno15} develop a set of global heuristic functions:
DTG (domain transition graphs) heuristic and landmarks heuristic, in order to improve the efficiency of the FMAP framework.

Stolba and Komenda \cite{Stolba17} present a multi-agent distributed and local asynchronous (MADLA) planner.
This planner adopts a distributed state-space forward-chaining multi-heuristic search.
The multi-heuristic search takes the advantages of both local and distributed heuristic searches by combining them together.
As a result, the combination of the two heuristics outperforms the two heuristics separately.

Maliah et al. \cite{Maliah17} propose a greedy privacy-preserving planner (GPPP).
In GPPP, agents collaboratively generate an abstract global plan
based on two privacy-preserving heuristics: landmark-based heuristic and privacy-preserving pattern database heuristic.
Each agent generates a local plan by extending the global plan.

\subsection{Strong privacy-preserving approaches}
Brafman \cite{Brafman15} is the first to theoretically prove strong privacy in multi-agent planning. 
He proposes an approach referred to as Secure-MAFS (secure multi-agent forward search).
Secure-MAFS extends the MAFS approach \cite{Nissim14}
by reducing the amount of information exchanged between agents.
In Secure-MAFS, agents protect their privacy by opting not to communicate a given two states to others
if these two states differ only in their private elements.
This is because other agents could possibly deduce private information through the non-private or public part of the states.

Tozicka et al. \cite{Tozicka17} investigate the limits of strong privacy-preserving planning.
They formulate three aspects of strong privacy-preserving planning: privacy, completeness, and efficiency.
They theoretically find that these three aspects are difficult to achieve at the same time for a wide class of planning algorithms.
Also, they develop a strong privacy-preserving planner
that embodies a family of planning algorithms.
The planner is based on private set intersection, 
which has been proven to be computationally secure.

Stolba et al. \cite{Stolba16a,Stolba16b,Stolba18} refine privacy metrics by quantifying the amount of privacy loss.
In this case, their analysis of privacy loss is conducted by assessing information leakage \cite{Braun09,Smith09}.
The amount of information leakage is measured as the difference between initial uncertainty and remaining uncertainty.
They also develop a general approach to compute the privacy loss of search-based multi-agent planners.
This computation is based on search tree reconstruction and classification of leaked information pertaining to the applicability of actions.

\tianqing{if we did not touch object cardinality and agent privacy preservatin, we can use a subsection titled other privacy preserving approachs}
\dayong{done}

\subsection{Other privacy-preserving approaches}


Some other existing works seem to be related to ours,
such as differentially private networks \cite{Fioretto19} and privacy-preserving distributed constraint optimization \cite{Yeoh12}.
However, the research aims of these works differ from ours. 

The research of differentially private networks mainly aims at hiding specific information contained in a network,
which may be disclosed by answering queries regarding that network.
By contrast, multi-agent privacy-preserving planning aims at collaboratively making plans
without revealing the private facts of each participating agent.
In \cite{Kasi13}, Kasiviswanathan et al. develop a set of node-differentially private algorithms to engage in the private analysis of network data.
The key concept here is to obfuscate the input graph onto the set of graphs with maximum degree below a certain threshold.
Blocki et al. \cite{Blocki13} improve accuracy in differentially private data analysis
by introducing the notion of restricted sensitivity in order to reduce noise.
Restricted sensitivity represents the sensitivity of a query only over a specific subset of all possible networks.
Proserpio et al. \cite{Proserpio14} propose a platform for differentially private data analysis: wPINQ (weighted Privacy Integrated Query).
wPINQ treats edges as a weighted dataset
on which it performs $\epsilon$-differentially private computations,
such as manipulation of records and their weights.
Thus, the presence or absence of individual edges can be masked.
Fioretto et al. \cite{Fioretto19} design a privacy-preserving obfuscation mechanism for critical infrastructure networks.
Their mechanism consists of three phases: 1) obfuscating the locations of nodes using the exponential mechanism,
2) obfuscating the values of nodes using the Laplace mechanism,
and 3) redistributing the noise introduced in the previous two phases using a bi-level optimization problem.
These works assume the existence of adversaries
while in multi-agent planning, agents are typically assumed to be honest but curious.

Research into privacy-preserving distributed constraint optimization aims at
securely coordinating the value assignment for the variables under a set of constraints in order to optimize a global objective function \cite{Fioretto18b}.
By contrast, multi-agent privacy-preserving planning aims at securely making plans that enable individual agents to achieve their goals.
Grinshpoun and Tassa \cite{Grin14} devise a novel distributed constraint optimization problem (DCOP) algorithm that preserves constraint privacy.
In their problem, a group of agents needs to compare the sum of private inputs possessed by those agents against an upper bound held by another agent.
During this comparison, none of these agents learns information on either the sum or the private inputs of other agents.
Their algorithm accomplishes this through the use of a secure summation protocol and a secure comparison protocol.
Tassa et al. \cite{Tassa19} propose a DCOP algorithm that is immune to collusion
and offers constraint, topology and decision privacy.
To achieve this goal, they adopt a secure multi-party computation protocol \cite{Ben17}
which is capable of securely comparing the cost of the current full assignment and the upper bound
and guaranteeing the security of collusion of up to half of the total agents.
From an examination of the two above-mentioned works, it can be seen that
the privacy-preserving DCOP mainly focuses on securely comparing the values of variables against an upper bound,
while multi-agent privacy-preserving planning mainly focuses on the secure computation of each individual agent.

\section{A Motivating Example}\label{sec:motivation example}
\begin{figure}[ht]
\centering
	\includegraphics[scale=0.28]{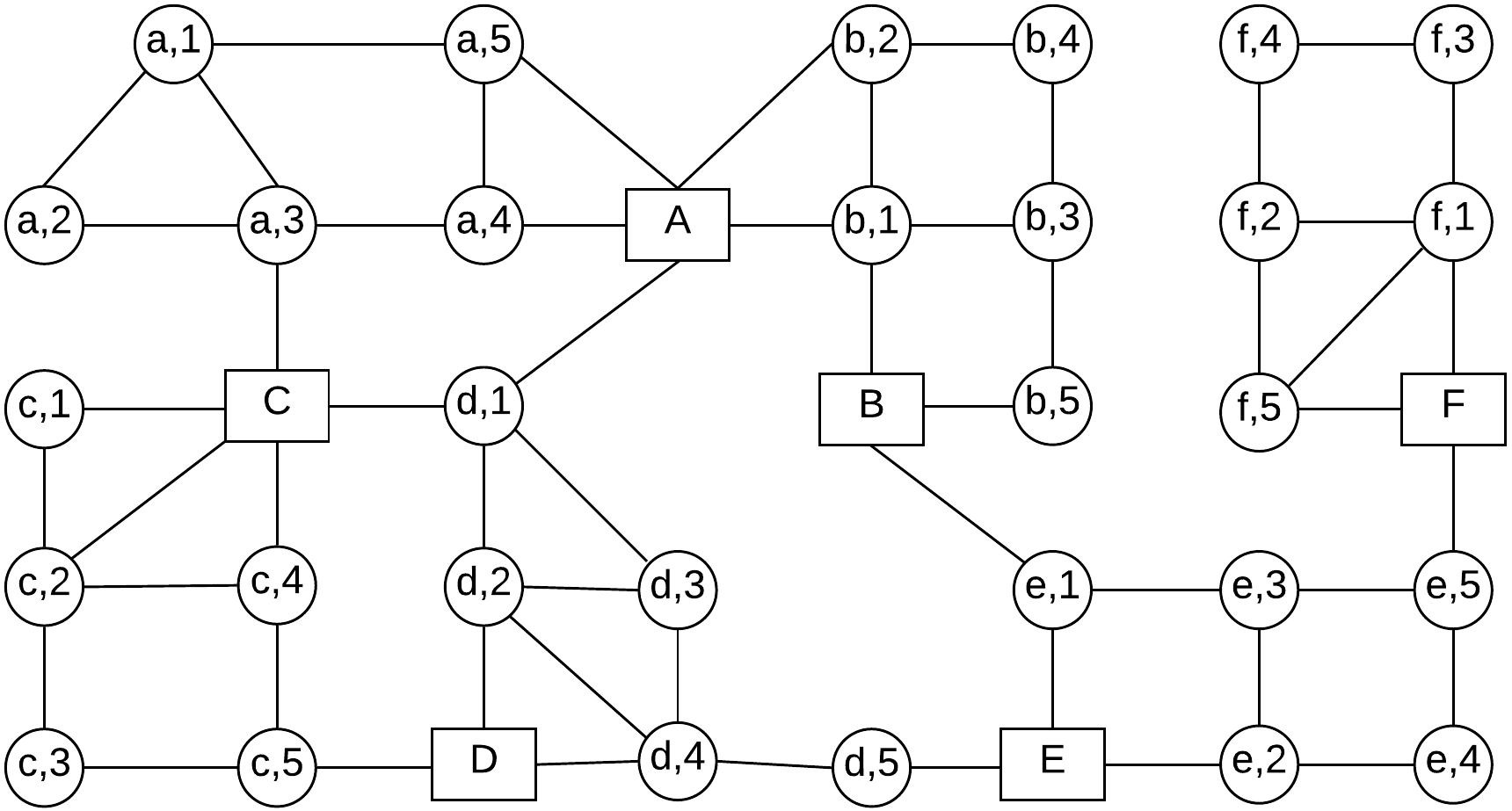}
	\caption{An example of a logistic map}
	\label{fig:example}
\end{figure}

Fig. \ref{fig:example} presents a military logistic map.
In this map, a circle denotes a military base while a rectangle denotes a logistic center.
The lines connecting the bases and logistic centers are routes.
Each route has a length, which is not indicated on the map in the interests of clarity.
Each letter in a circle indicates a military unit's name,
while each number in a circle is the index of a base in the military unit's local area.
For example, `$(a,3)$' denotes the third base in military unit $a$'s local area.
Six military units are included on this map: $a$, $b$, $c$, $d$, $e$, and $f$.
Each unit exclusively operates in a local area of the map.

Information about a local area is private to the corresponding military unit.
\tianqing{what is specific area? a,or a,3, or 3? }
This information includes 1) the number of military bases in this local area, 
2) the number of routes in this local area, 
3) the length of these routes in this local area, 
and 4) the positions of packages in this local area.
However, information regarding whether a given package is or is not located in a particular logistic center is public.
For example, in Fig. \ref{fig:areaA}, we extract military unit $a$'s local area from Fig. \ref{fig:example}.
In Fig. \ref{fig:areaA}, there are five bases: $(a,1)$, $(a,2)$, $(a,3)$, $(a,4)$ and $(a,5)$.
The number of these bases and routes is private to military unit $a$.
Moreover, the length of these routes is also private to unit $a$.
As noted above, the information that a package is located in logistic center $A$ is public and known to all military units.
\tianqing{you have to use example of depict each items, for example, it seems that you did not define distance in this area, nor topology}
\dayong{An example has been added to explain these things.}
\begin{figure}[ht]
\centering
	\includegraphics[scale=0.18]{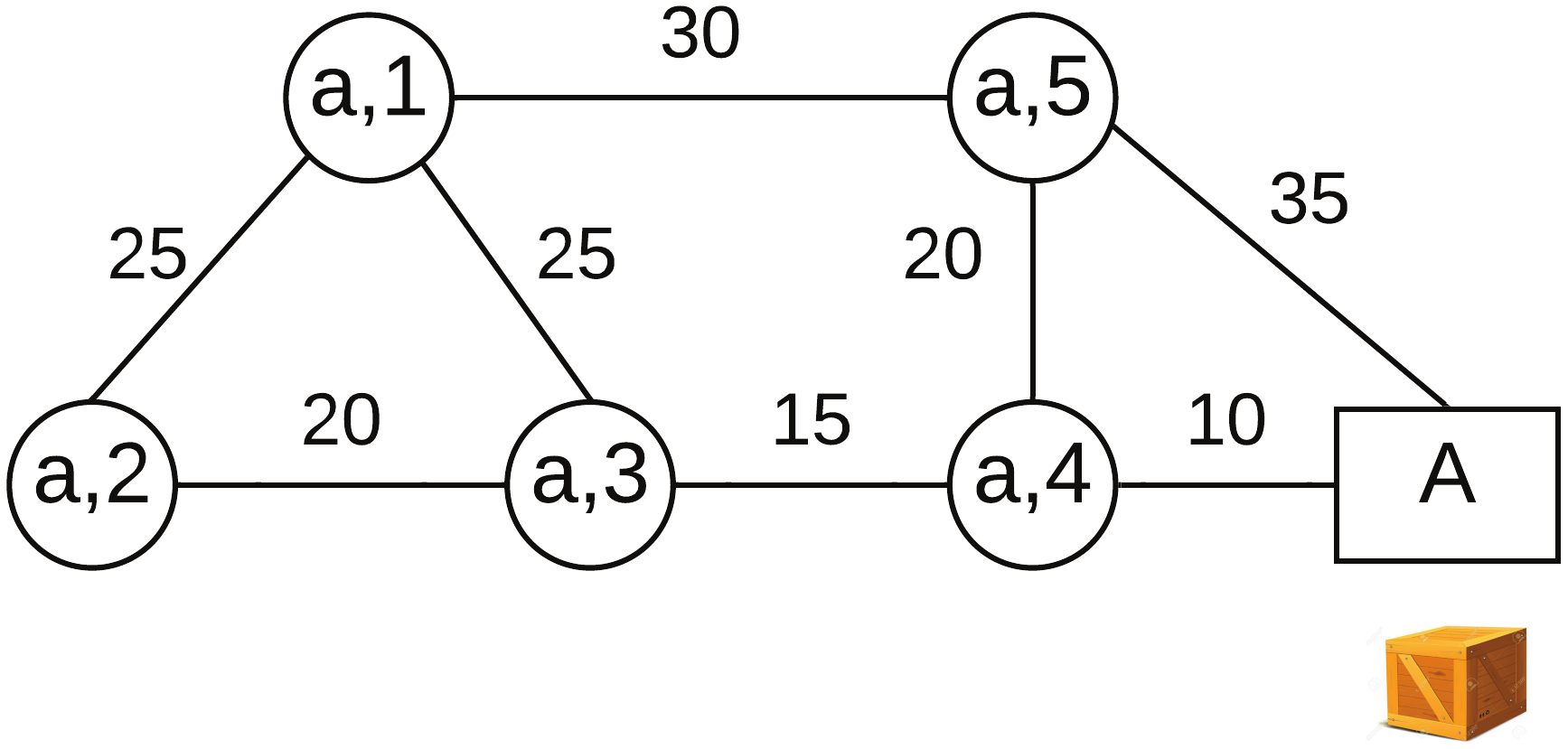}
	\caption{Unit $a$'s local area}
	\label{fig:areaA}
\end{figure}

The problem in this example is as follows:
how should a plan be made for a military unit to transport a package from one base to another,
while strongly preserving each military unit's privacy?
For example, unit $a$ wants to transport a package from $(a,2)$ to $(f,4)$, but $(f,4)$ is located in military unit $f$'s local area.
Thus, multiple units must collaborate to make a plan to deliver the package, 
while each unit's privacy is required to be strongly preserved during this process.
This problem therefore includes the above-mentioned two challenges.
First, planning for military units is highly expected to achieve strong privacy, completeness and efficiency simultaneously,
especially when military units are involved in a war.
Second, the communication of each military unit may be constrained,
as increased level of communication may result in a higher chance of private information being leaked \cite{Niu18}.

As the above two challenges have not been adequately addressed by existing approaches, 
these approaches may not be suitable for this environment.
Accordingly, in this paper, a novel strong privacy-preserving planning approach is proposed
that takes these two challenges into account.

\section{Preliminaries}\label{sec:preliminaries}
\subsection{The planning model}\label{sub:planning}
We propose a multi-agent planning model, Graph-STRIPS,
which is based on a widely used privacy-aware planning model, MA-STRIPS \cite{Brafman08}.
Graph-STRIPS is defined by a 12-tuple:
$\langle\mathcal{AG},\mathcal{V},\{\mathcal{V}_i\}_{i=1}^m,\mathcal{V}_{Pub},\mathcal{E},\{\mathcal{E}_i\}_{i=1}^m,\mathcal{P},\{\mathcal{P}_i\}_{i=1}^m,$ 
$\mathcal{A},\{\mathcal{A}_i\}_{i=1}^m,\mathcal{I},\mathcal{G}\rangle$:

\begin{itemize}
  \item $\mathcal{AG}$ is a set of agents in the environment;
  \item $\mathcal{V}$ is a set of nodes (e.g., physical entities) in the environment;
  \item $\mathcal{V}_i$ is the set of nodes private to agent $i$;
	\item $\mathcal{V}_{Pub}$ is the set of public nodes in the environment, $\mathcal{V}_{Pub}=\mathcal{V}-\cup^{|\mathcal{AG}|}_{l=1}\mathcal{V}_l$;
  \item $\mathcal{E}$ is a set of edges (e.g., the relationships between physical entities) in the environment;
  \item $\mathcal{E}_i$ is the set of edges private to agent $i$;
  \item $\mathcal{P}$ is a set of possible facts about the environment;
  \item $\mathcal{P}_i$ is the set of private facts of agent $i$;
  \item $\mathcal{A}$ is a set of possible actions of all the agents;
  \item $\mathcal{A}_i$ is the set of private actions of agent $i$;
  \item $m$ is the number of agents in the environment;
  \item $\mathcal{I}$ is the initial state of the environment;
  \item $\mathcal{G}$ is the goal state.
\end{itemize}

For example, in Fig. \ref{fig:example}, each military unit is modelled as an agent.
In this case, we have the following:
\begin{itemize}
  \item $\mathcal{AG}=\{a,b,c,d,e,f\}$ and $m=6$;
  \item $\mathcal{V}$ is the set of military bases and logistic centers;
  \item $\mathcal{V}_i$ denotes the set of bases in the local area of agent $i$;
        for example, in agent $a$'s local area, $\mathcal{V}_a=\{(a,1),(a,2),(a,3),(a,4),(a,5)\}$;
	\item $\mathcal{V}_{Pub}$ denotes the set of logistic centers;
  \item $\mathcal{E}$ is the set of routes connecting bases and centers;
  \item $\mathcal{E}_i$ denotes the set of routes in the local area of agent $i$;
        for example, in agent $a$'s local area, $\mathcal{E}_a=\{(a,1)\sim(a,2),(a,2)\sim(a,3),(a,3)\sim(a,4),...\}$;
  \item $\mathcal{P}$ includes the position of bases, logistic centers and packages;
  \item $\mathcal{P}_i$ includes 1) the position of packages in the local area of agent $i$; 
        for example, if agent $a$ has a package in $(a,1)$, then $\mathcal{P}_a=\{package\_in\_(a,1)\}$;
        2) the number of bases in the local area of agent $i$; 
				3) the number of routes in the local area of agent $i$ 
				and 4) the length of these routes.
  \item $\mathcal{A}$ includes the actions of moving from a base or a logistic center to another base or logistic center;
  \item $\mathcal{A}_i$ includes the actions of moving from a base or a logistic center to another base or logistic center in the local area of agent $i$;
        for example, an action of agent $a$ can be: \emph{moving from $(a,1$) to $(a,2)$} which is abbreviated as $(a,1)\rightarrow (a,2)$,
        where the pre-condition of this action is $package\_in\_(a,1)$ and the effect of this action is $package\_in\_(a,2)$;
  \item If $a$ wants to transport a package from $(a,3)$ to $(e,2)$,
  then $\mathcal{I}=\{package\_ in\_ (a,3)\}$ and $\mathcal{G}=\{package\_ in\_ (e,2)\}$: 
  $\mathcal{V}_{\mathcal{I}}=(a,3)$ and $\mathcal{V}_{\mathcal{G}}=(e,2)$.
\end{itemize}

If agent $a$ is to transport a package from $(a,3)$ to $(e,2)$, the associated plan could be
$\Pi^{\rhd}_a=\langle\mathcal{V}_{\mathcal{I}}\rightarrow (a,4),(a,4)\rightarrow A, A\rightarrow B, B\rightarrow E, E\rightarrow\mathcal{V}_{\mathcal{G}}\rangle$.
In plan $\Pi^{\rhd}_a$, the details of how to move from $A$ to $B$, from $B$ to $E$ and from $E$ to $(e,2)$ are not included,
as these details involve other agents' private information
that is unknown to agent $a$.
In fact, as $(e,2)$ is private to agent $e$,
agent $a$ is unaware of the existence of $(e,2)$.
Agent $a$, however, knows that the destination is in agent $e$'s local area.

Specifically, each agent’s private information includes two parts: private facts and private actions. 
An agent’s private facts include four components: 1) the number of nodes in its local area, i.e., the number of military bases in the logistic example, 
2) the number of edges in its local area, i.e., the number of routes in the logistic example, 
3) the length of these edges, i.e., the length of routes in the logistic example and 
4) the positions of any items in its local area, i.e., the positions of packages in the logistic example. 
An agent’s private actions are the movements of items in its local area. 
In this private information, the positions and movements of items are not required by other agents. 
Thus, these two pieces of information will not be disclosed to other agents. 
For the other three pieces of information: the number of nodes, the number of edges and the length of edges, 
since agents have to share the three pieces of information for planning, 
we need to develop a privacy-preserving mechanism to protect them.

Formally, we have the following definition.
\begin{definition}[Agents' privacy]\label{def:privacy}
An agent $i$'s privacy is defined as a 3-tuple: $\langle \mathcal{V}_i,\mathcal{E}_i,L(\mathcal{E}_i)\rangle$, 
where $\mathcal{V}_i$ is the set of nodes in agent $i$’s local area, 
$\mathcal{E}_i$ is the set of edges and $L(\mathcal{E}_i)$ denotes the set of length of the edges.
\end{definition}

To protect the privacy of $\mathcal{V}_i$ and $\mathcal{E}_i$, 
we adopt the node-differential privacy technique 
and uses the Laplace mechanism to mask the number of both nodes and edges. 
To protect the privacy of $L(\mathcal{E}_i)$, we adopt the exponential mechanism along with a reinforcement learning algorithm. 


\subsection{Privacy-preserving multi-agent planning}\label{sub:PPMAP}
The idea behind privacy-preserving multi-agent planning is based mainly on
research in the field of secure multi-party computation \cite{Zhao19},
where multiple agents jointly compute a function
while each agent possesses private input data.
The goal is to compute the function without revealing agents' private input data.

One intuitive solution would be to simply not disclose any private information to others.
However, since an agent must collaborate with other agents in order to achieve its goals,
it is infeasible to hide all private information completely.
To ensure that this private information is disclosed securely to the other agents, 
it is necessary to use privacy-preserving techniques.


\begin{definition}[Strong Privacy \cite{Torreno17}]\label{def:strong}
A multi-agent planning approach is strong privacy-preserving
if none of the agents is able to infer any private facts regarding an agent’s tasks
from the public information it obtains during planning.
\end{definition}
In this paper, we adopt differential privacy,
which is one of the most promising techniques in this field \cite{Dwork06}, to achieve strong privacy.

In addition to a privacy guarantee, a planning approach also needs soundness and completeness guarantees.
\begin{definition}[Soundness \cite{Maliah17}]
A planning approach is sound iff, for a given task,
there is at least one valid plan
followed by all participating agents to reach the goal state.
\end{definition}
\begin{definition}[Completeness \cite{Tozicka17}]
A planning approach is complete iff, for a given task,
1) the approach is sound and
2) the approach can guarantee to create a valid plan.
\end{definition}


\tianqing{did we use below privacies? if not, we do not need to include those definitions.}
\dayong{I have removed them.}

%

\subsection{Differential privacy}\label{sub:DP}
Differential privacy (DP) can guarantee that
any individual record being stored in or removed from a dataset will make little difference to the analytical output of the dataset \cite{Dwork06,Zhu18}.
DP has already been successfully applied to agent advising \cite{Ye19,Zhu19} and model publishing \cite{Zhu18b,Zhang20}.
Therefore, this property may also be suitable for application to the planning problem.



In differential privacy, two datasets $D$ and $D'$ are deemed neighboring datasets if they differ in only one record.
A query $f$ is a function that maps dataset $D$ to an abstract range $\mathbb{R}$: $f: D\rightarrow\mathbb{R}$.
The goal of differential privacy is to mask the differences in the answers to query $f$ between the neighboring datasets.
In $\epsilon$-differential privacy, parameter $\epsilon$ is defined as the privacy budget,
which controls the privacy guarantee level of mechanism $\mathcal{M}$.
A smaller $\epsilon$ represents stronger privacy.
The formal definition of $\epsilon$-differential privacy is as follows:

\begin{definition}[$\epsilon$-Differential Privacy \cite{Dwork14}]\label{Def-DP}
A mechanism $\mathcal{M}$ gives $\epsilon$-differential privacy for any input pair of neighboring datasets $D$ and $D'$,
and for any possible output set $\Omega$, if $\mathcal{M}$ satisfies:
\begin{equation}
Pr[\mathcal{M}(D) \in \Omega] \leq \exp(\epsilon) \cdot Pr[\mathcal{M}(D') \in \Omega]
\end{equation}
\end{definition}

In Definition \ref{Def-DP}, mechanism $\mathcal{M}$ is a function
that takes a dataset as input and outputs a query result.
Definition \ref{Def-DP} states that if a mechanism,
applied on two neighboring datasets, can obtain very similar results,
then this mechanism is a differential privacy mechanism.

Sensitivity is a parameter that captures the magnitude by which a single individual's data can change the function $f$ in the worst case.
\begin{definition}[Sensitivity \cite{Dwork14}]\label{Def-GS} For a query $f:D\rightarrow\mathbb{R}$, the sensitivity of $f$ is defined as
\begin{equation}
\Delta S=\max_{D,D'} ||f(D)-f(D')||_{1}
\end{equation}
\end{definition}

Two of the most widely used differential privacy mechanisms are the Laplace mechanism and the exponential mechanism.
The Laplace mechanism adds Laplace noise to the true answer. 
We use $Lap(b)$ to represent the noise sampled from the Laplace distribution with scaling $b$. 
\begin{definition}[Laplace mechanism \cite{Dwork14}]\label{Def-LA}
Given a function $f: D \rightarrow \mathbb{R}$ over a dataset $D$, Equation~\ref{eq-lap} is the Laplace mechanism
that provides the $\epsilon$-differential privacy \cite{Dwork14}.
\begin{equation}\label{eq-lap}
\widehat{f}(D)=f(D)+Lap(\frac{\Delta S}{\epsilon})
\end{equation}
\end{definition}

\begin{definition}[The Exponential Mechanism \cite{Dwork14}]\label{Def-Ex}
The exponential mechanism $\mathcal{M}_E$ selects and outputs an element $r\in\mathcal{R}$
with probability proportional to $exp(\frac{\epsilon u(D,r)}{2\Delta u})$,
where $u(D,r)$ is the utility of a pair of dataset and output,
and $\Delta u=\max\limits_{r\in\mathcal{R}} \max\limits_{D,D':||D-D'||_1\leq 1} |u(D,r)-u(D',r)|$
is the sensitivity of utility.
\end{definition}

If a graph is treated as a dataset, a given node in the graph can be interpreted as a record in the dataset.
According to Definition \ref{Def-DP}, we can have a similar definition for $\epsilon$-node-differential privacy as follows.
\begin{definition}[$\epsilon$-node-Differential Privacy \cite{Rask15}]\label{Def-DP-node}
A mechanism $\mathcal{M}$ gives $\epsilon$-node-differential privacy for any input pair of neighboring graphs $G$ and $G'$,
where $G$ and $G'$ differ by at most one node,
and for any possible output set, $\Omega$, if $\mathcal{M}$ satisfies:
\begin{equation}
Pr[\mathcal{M}(G) \in \Omega] \leq \exp(\epsilon) \cdot Pr[\mathcal{M}(G') \in \Omega]
\end{equation}
\end{definition}

Node-differential privacy guarantees similar output distributions on any pair of neighboring graphs 
that differ in one node and the edges adjacent to that node. 
Thus, the privacy of both nodes and edges can be preserved.


\section{The strong privacy-preserving planning approach}\label{sec:method}
In this section, we first outline our approach in a general form,
then use the aforementioned logistic example to instantiate our approach.
A generalized form of our approach is presented in Algorithm \ref{alg:general}. 
In Line 5 of Algorithm \ref{alg:general}, agent $i$ takes all the available public nodes into account to create a plan. 
These available public nodes are on the way from the initial state to the goal state 
and found by agent $i$ during its searching phase. 
However, some of these available public nodes are not needed in the final plan. 
Then, in Line 8, agent $i$ uses a reinforcement learning algorithm to find the shortest route from the initial state to the goal state, 
and selects the public nodes on the shortest route to create a plan. 
The learning is based on the information obtained in Lines 6 and 7.
\begin{algorithm}
\caption{The general form of our approach}
\label{alg:general}
/*Take agent $i\in\mathcal{AG}$ as an example;*/\\
\textbf{Input}: agent $i$'s local sets: $\mathcal{V}_i$, $\mathcal{E}_i$, $\mathcal{P}_i$, $\mathcal{A}_i$,
and all the public facts and actions;
also, the initial state $\mathcal{I}$ and the goal state $\mathcal{G}$;\\
\textbf{Output}: a complete plan $\Pi^{\rhd}_i$ from $\mathcal{I}$ to $\mathcal{G}$;\\
Agent $i$ identifies $\mathcal{V}_{\mathcal{I}}$ and $\mathcal{V}_{\mathcal{G}}$ from the initial state $\mathcal{I}$ and the goal state $\mathcal{G}$, respectively,
and initializes plan: $\Pi^{\rhd}_i=\langle \mathcal{V}_{\mathcal{I}}\rightarrow\mathcal{V}_{\mathcal{G}}\rangle$;\\
Agent $i$ searches the goal state,
and details plan $\Pi^{\rhd}_i$ by adding the available public actions into plan $\Pi^{\rhd}_i$:
$\Pi^{\rhd}_i=\langle \mathcal{V}_{\mathcal{I}}\rightarrow v_j,...,v_k\rightarrow\mathcal{V}_{\mathcal{G}}\rangle$,
where $\{v_j,...,v_k\}\subset\mathcal{V}_{Pub}$;\\
Agent $i$ queries the intermediate agents to request local private facts;\\
Each of these intermediate agents obfuscates its local private facts using the differential privacy technique;\\
Agent $i$ uses the obfuscated facts to refine the plan
by removing unnecessary public actions by means of a reinforcement learning algorithm:
$\Pi^{\rhd}_i=\langle \mathcal{V}_{\mathcal{I}}\rightarrow v_x,...,v_y\rightarrow\mathcal{V}_{\mathcal{G}}\rangle$,
where $j\leq x,y\leq k$;\\
Each action in plan $\Pi^{\rhd}_i$ is further refined by each agent creating a local plan;
for example, action $\mathcal{V}_{\mathcal{I}}\rightarrow v_x$ is refined by agent $i$ creating a local plan
as $\langle\mathcal{V}_{\mathcal{I}}\rightarrow v_{i_a},...,v_{i_b}\rightarrow v_x\rangle$,
where $\{v_{i_a},...,v_{i_b}\}\subset\mathcal{V}_i$;\\
Agent $i$ merges these local plans to form a complete plan:
$\Pi^{\rhd}_i=\langle\mathcal{V}_{\mathcal{I}}\rightarrow v_{i_a},...,v_{i_b}\rightarrow v_x,...,v_y\rightarrow\mathcal{V}_{\mathcal{G}}\rangle$;
note that the details of local plans, created by intermediate agents, are not shown in plan $\Pi^{\rhd}_i$,
since they contain non-obfuscated private facts belonging to the intermediate agents;\\
\end{algorithm}


To instantiate this general approach, we use the logistic example given in Section \ref{sec:motivation example}.
In this example, we assume that 1) all routes in the logistic map are bi-directional;
2) each individual agent controls only one local area;
and 3) there are no isolated nodes on the map.
An agent follows three steps to create a plan:
\begin{itemize}
  \item Step 1: the agent creates a high-level logistic map;
  \item Step 2: the agent asks the agents in the intermediate areas to provide route and map information;
  \item Step 3: the agent uses the received information to create a complete plan.
\end{itemize}

\subsection{Step 1: Creating a high-level map}
In Fig. \ref{fig:example}, it is supposed that agent $a$ has a package to transport from $(a,2)$ to $(f,4)$.
As agent $f$ is not agent $a$'s neighbor,
$a$ must query its neighbors, $b$, $c$, and $d$, regarding the position of $f$.
Two agents are deemed neighbors if there is at least one logistic center connecting two military bases,
such that one of these bases belongs to each of the agents.

In the case that agents, $b$, $c$ and $d$, also do not have $f$ as a neighbor,
they pass this query on to their neighbors, e.g., agent $e$.
Finally, agent $f$ is found through agent $e$.
By using the information acquired while finding agent $f$,
agent $a$ can create a high-level logistic map, as shown in Fig. \ref{fig:exampleHighLevel}.

\begin{figure}[ht]
\centering
	\includegraphics[scale=0.25]{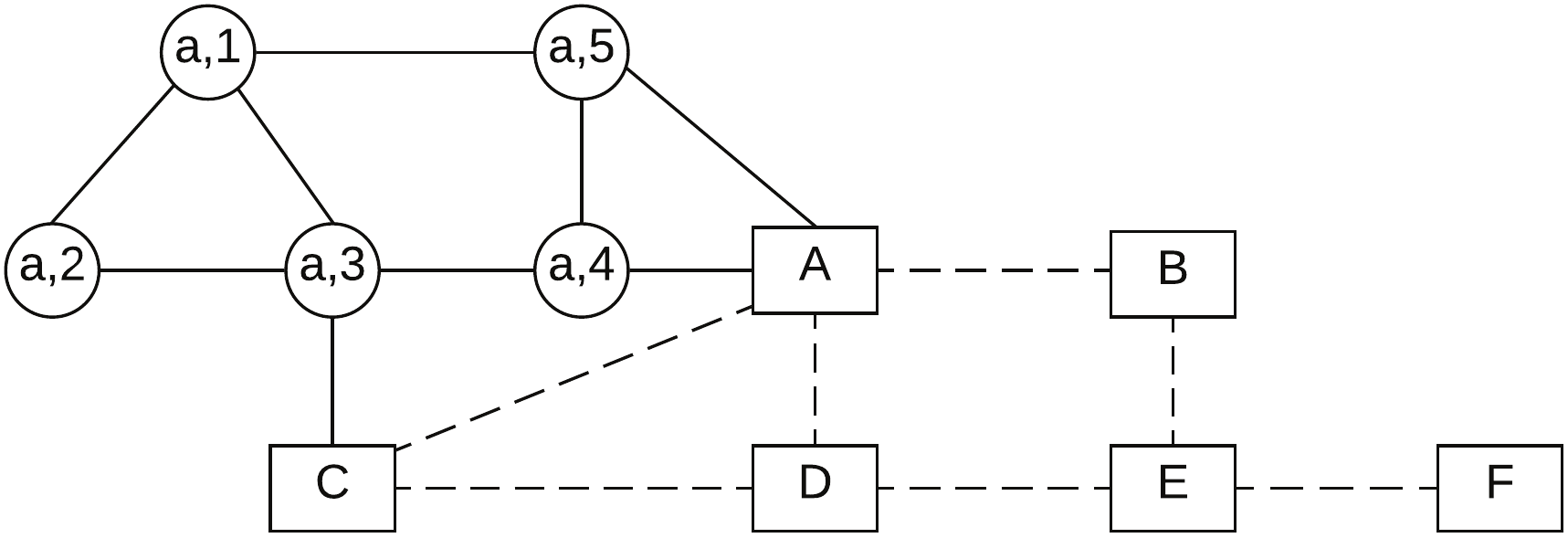}
	\caption{A high-level logistic map from agent $a$'s perspective}
	\label{fig:exampleHighLevel}
\end{figure}

%

\subsection{Step 2: Each intermediate agent provides map and route information}\label{sub:Step2}
After creating the high-level map,
agent $a$ asks the agents in the intermediate areas to provide route and map information.
In Fig. \ref{fig:exampleHighLevel}, the intermediate agents are $b$, $c$, $d$ and $e$.
To protect the topological privacy of local maps, each intermediate agent uses the Laplace mechanism
to obfuscate its local map, i.e., modify the number of bases and routes.
Moreover, to protect length privacy, 
each intermediate agent uses the exponential mechanism, along with a reinforcement learning algorithm, 
to assign probability distributions over the routes on its obfuscated local map
while removing the distance information.
Finally, each intermediate agent presents an obfuscated local map, with probability distributions over routes, to agent $a$.
An example explaining this process is presented below.


\begin{figure}[ht]
	\begin{minipage}{0.48\textwidth}
   \subfigure[\scriptsize{Before using DP}]{
    \includegraphics[scale=0.40]{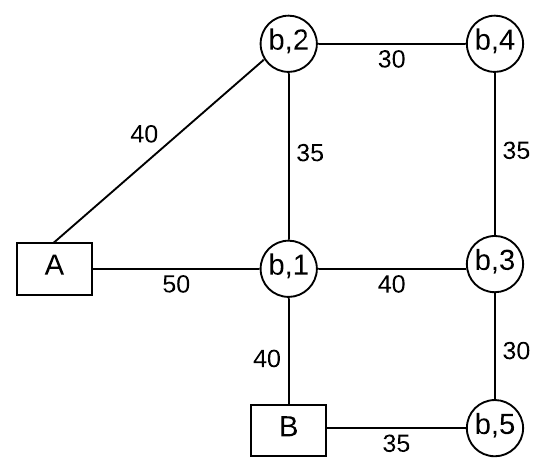}
			\label{fig:beforeDP}}
    \subfigure[\scriptsize{After using DP}]{
    \includegraphics[scale=0.40]{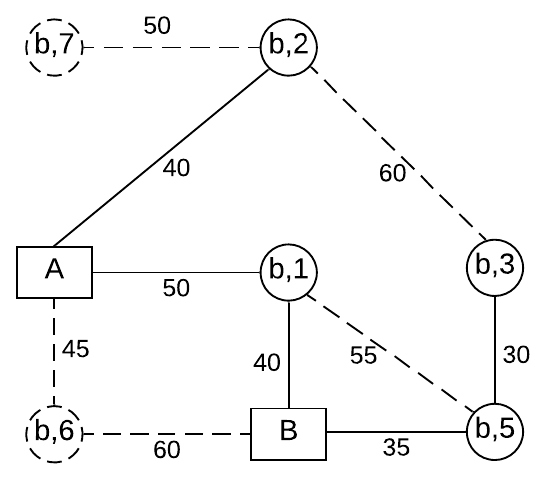}
			\label{fig:afterDP}}\\[2ex]
    \subfigure[\scriptsize{After route calculation}]{
    \includegraphics[scale=0.40]{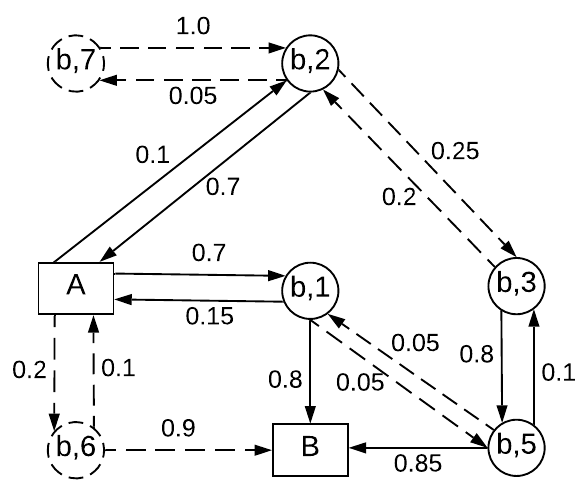}
			\label{fig:afterCalculation}}
		\subfigure[\scriptsize{After probability redistribution}]{
    \includegraphics[scale=0.40]{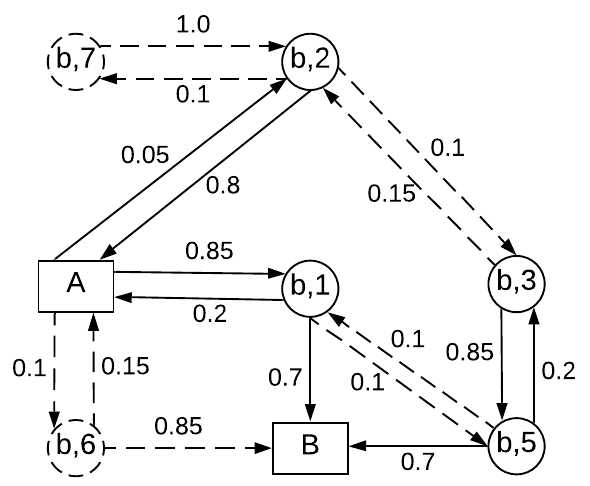}
			\label{fig:afterRedistribution}}
    \end{minipage}
	\caption{Obfuscation of agent $b$'s local map}
	\label{fig:transfer}
\end{figure}

In this example, Fig. \ref{fig:beforeDP} is agent $b$'s local map with route length.
Fig. \ref{fig:afterDP} is agent $b$'s obfuscated local map.
Referring to the obfuscated local map,
agent $b$ calculates the shortest route between logistic centers $A$ and $B$.
Then, agent $b$ marks the probability distributions over the routes, as shown in Fig. \ref{fig:afterCalculation}.
Each probability on a route indicates the probability of that route being selected. 
To guarantee the route length privacy, agent $b$ uses the exponential mechanism to redistribute these probabilities over the routes, 
as shown in Fig. \ref{fig:afterRedistribution}.
Agent $b$ then sends Fig. \ref{fig:afterRedistribution} to agent $a$.
Finally, agent $a$ receives a map
where the topology has been obfuscated
and the distance information has been replaced by probability distributions.

\subsubsection{Using the Laplace mechanism to obfuscate topologies}

The Laplace mechanism is applied to the statistical information contained in a map.
We utilize a $1K$-distribution \cite{Maha06} to obtain the statistical information.
More specifically, the $1K$-distribution is used to calculate the node degree distribution of a given graph.
To describe how the $1K$-distribution is utilized for this purpose, we employ the following example.
In Fig. \ref{fig:beforeDP}, the number of nodes with $1$ degree is $0$;
the number of nodes with $2$ degrees is $4$, (i.e., nodes $A$, $B$, $(b,4)$ and $(b,5)$);
the number of nodes with $3$ degrees is $2$, (i.e., nodes $(b,2)$ and $(b,3)$);
and the number of nodes with $4$ degrees is $1$, (i.e., node $(b,1)$).
Thus, the $1K$-distribution, i.e., the node degree distribution, of Fig. \ref{fig:beforeDP} is:
$P(1)=0$, $P(2)=4$, $P(3)=2$, and $P(4)=1$.

\begin{algorithm}
\caption{The Laplace mechanism-based obfuscation}
\label{alg:DP}
/*Take agent $b$ as an example*/\\
\textbf{Input}: agent $b$'s map (Fig. \ref{fig:beforeDP});\\
\textbf{Output}: agent $b$'s obfuscated map (Fig. \ref{fig:afterDP});\\
Use $1K$-distribution to obtain the statistical information of $b$'s map;\\
\For{$k=1$ to $d_{max}$}{
    $\tilde{P}(k)\leftarrow P(k)+\lceil Lap(\frac{\Delta S\cdot d_{max}}{\epsilon})\rceil$;\\
}
Rewire nodes to satisfy each $\tilde{P}(k)$;\\
\end{algorithm}
The Laplace mechanism-based obfuscation is outlined in Algorithm \ref{alg:DP}.
In Line 4, the statistical information of $b$'s map is obtained using the $1K$-distribution.
In Lines 5-6, the Laplace noise is added to each $P(k)$ in order to randomize the node degree distribution;
accordingly the number of nodes now becomes $\sum_{1\leq k\leq d_{max}}\tilde{P}(k)$.
Here, $d_{max}$ is the maximum node degree in a map,
and $d_{max}=4$ in the example of Fig. \ref{fig:beforeDP}.
After adding Laplace noise,
the node degree distribution could be as follows: $\tilde{P}(1)=1$, $\tilde{P}(2)=2$, $\tilde{P}(3)=5$, and $\tilde{P}(4)=0$.
Next, in Line 7, nodes are rewired to satisfy each $\tilde{P}(k)$, where $k\in\{1,...,d_{max}\}$.
The node rewiring is carried out using the graph model generator provided in \cite{Maha06}.
After node rewiring is complete, fake routes may be introduced, such as route $A\rightarrow (b,6)$ in Fig. \ref{fig:afterDP}.
The length of a fake route is randomly generated based on the average length of the existing real routes.

The reason why the Laplace mechanism is used here is that
our aim is to obfuscate the topology of each agent’s local map by modifying the degree distribution.
Since a degree distribution consists of a set of numbers,
the Laplace mechanism is more appropriate here than the exponential mechanism
which is mainly used for proportionally selecting an element from a set.
It should also be noted at this point that the Laplace mechanism may generate negative numbers.
This, however, is not a problem in this paper,
as we need both positive and negative Laplace noise to ensure that our approach satisfies $\epsilon$-differential privacy. 
Moreover, we adopt the Laplace mechanism to add noise to node degree distributions 
rather than directly adding noise to the number of nodes or edges. 
By adding noise to node degree distributions, our approach can not only guarantee the node and edge privacy of agents, 
but also guarantee the connection of an obfuscated graph. 
The connection of an obfuscated graph is a necessity for the completeness of our planning approach. 
The detailed theoretical analysis will be given in the next section.

The rationale behind Algorithm \ref{alg:DP} is as follows.
According to the definitions of differential privacy,
a map is interpreted as a dataset $D$,
while a node on a map is interpreted as a record in a dataset.
As with the concept of neighboring datasets,
two maps are deemed neighbors if they differ by only one node.
Thus, using $1K$-distribution to obtain a map's statistical information
can be thought of as querying some interesting information from a dataset, $f(D)$.
If we compare Definition \ref{Def-LA} to Line 6 in Algorithm \ref{alg:DP},
we can see that just as the Laplace mechanism can guarantee the privacy of a dataset,
it can also guarantee the privacy of a map.
More discussion about the preservation of privacy will be provided in the next section.

In Algorithm \ref{alg:DP}, $\Delta S$ represents the sensitivity of the degree distribution in a map.
The value of $\Delta S$ is determined by the maximum change in degree distribution
when a node is added into or removed from the map.
For example, in Fig. \ref{fig:beforeDP}, the degree scaling is from $1$ to $4$: $P(1),P(2),P(3),P(4)$.
According to Algorithm \ref{alg:DP}, Line 6, when a node is added into or removed from the map,
one of the four values, $P(1),P(2),P(3),P(4)$, will be incremented or decremented by $1$.
Thus, the maximum change of degree distribution is $1$, 
i.e., $\Delta S=1$ in Algorithm \ref{alg:DP}.

\subsubsection{Using reinforcement learning to compute probability distributions}

In a local area, such as the one in Fig. \ref{fig:beforeDP},
there is a set of local military bases and logistic centers,
along with a set of routes connecting these bases and centers.
As discussed in Section \ref{sec:preliminaries}, in the Graph-STRIPS model, $\mathcal{V}$ and $\mathcal{E}$ can be used to represent the topology of a map.
Accordingly, we use $\mathcal{V}$ to represent the military bases and logistic centers,
while $\mathcal{E}$ is used to denote the set of routes connecting these bases and centers.
Specifically, in Fig. \ref{fig:beforeDP}, $\mathcal{V}_b=\{(b,1),(b,2),(b,3),(b,4),(b,5)\}$,
and $\mathcal{E}_b=\{A\sim (b,1), A\sim(b,2), ...,(b,1)\sim B, (b,5)\sim B\}$.
Moreover, different bases or centers will have different routes available to them.
For example, in base $(b,1)$, there are four available routes: $\mathcal{E}_{(b,1)}=\{(b,1)\sim A, (b,1)\sim (b,2), (b,1)\sim (b,3), (b,1)\sim B\}$.
Furthermore, in center $A$, there are two available routes: $\mathcal{E}_A=\{A\sim (b,1), A\sim (b,2)\}$.


\begin{algorithm}
\caption{The reinforcement learning algorithm}
\label{alg:RL}
/*Take agent $b$ as an example*/\\
\textbf{Input}: agent $b$'s obfuscated map (Fig. \ref{fig:afterDP});\\
\textbf{Output}: agent $b$'s obfuscated map with probability distributions (Fig. \ref{fig:afterCalculation});\\
Initialize probability distributions;\\
Initialize the Q-value of each route;\\
Initialize the current position: $v\leftarrow A$;\\
\While{$v\neq B$}{
    Agent $b$ selects a route, $e$, based on the probability distribution $\boldsymbol{\pi}(v)=\langle\pi(v,e_1),...,\pi(v,e_n)\rangle$,
    where $e\in\mathcal{E}_v=\{e_1,...,e_n\}$;\\
    $r\leftarrow\mathcal{R}(v,e)$;\\
    $Q(v,e)\leftarrow (1-\alpha)Q(v,e)+\alpha[r+\gamma\underset{e_i\in\mathcal{E}_{v'}}{max_{e_i}} Q(v',e_i)]$;\\
    $\overline{r}\leftarrow\sum_{e_i\in\mathcal{E}_v}\pi(v,e_i)Q(v,e_i)$;\\
    \For{each route $e_i\in\mathcal{E}_v$}{
      $\pi(v,e_i)\leftarrow\pi(v,e_i)+\zeta(Q(v,e_i)-\overline{r})$;\\
    }
    $\boldsymbol{\pi}(v)\leftarrow Normalise(\boldsymbol{\pi}(v))$;\\
    $v\leftarrow v'$;\\
}
Agent $b$ marks the learned probability distributions over the routes;\\
\end{algorithm}
The reinforcement learning algorithm is outlined in Algorithm \ref{alg:RL}.
In Line 4, agent $b$ proportionally initializes probability distributions over actions,
where each action indicates the selection of a route.
The initialization is based on the lengths of the routes.
For example, in Fig. \ref{fig:afterDP}, the probability distribution over routes $A\sim (b,1)$ and $A\sim (b,6)$
can be initialized as $\frac{4}{9}$ and $\frac{5}{9}$, respectively.
In Line 5, agent $b$ initializes the Q-value of each route; 
here, the Q-value is an indication of how good a route is.
In this algorithm, the initial Q-value of a route is set based on the length of the route,
such that a shorter route is allocated a higher Q-value.
For example, in Fig. \ref{fig:afterDP}, the initial Q-value of route $A\sim (b,1)$ can be set to $\frac{100}{50}=2$,
while the initial Q-value of route $A\sim (b,6)$ can be set to $\frac{100}{40}=2.5$.
In Line 6, agent $b$ sets the initial position to $A$ and the destination to $B$.
This setting is based on the fact that,
as an intermediate agent, agent $b$ will help agent $a$ to transport the package from $A$ to $B$.

Regarding the loop, in Line 8, agent $b$ selects a route $e$ based on the probability distribution over the available routes in base $v$.
After taking route $e$, agent $b$ receives a reward $r$ (Line 9),
which is inversely proportional to the route length.
For example, in Fig. \ref{fig:afterDP}, $r(A\sim (b,1))$ and $r(A\sim (b,6))$ can be set to $4$ and $5$, respectively.
The reward $r$ is used to update the Q-value of route $e$ in base $v$ (Line 10).
This update is based on: 1) the current Q-value of $e$ in base $v$, $Q(v,e)$;
2) the maximum Q-value of the routes in new base $v'$, $\underset{e_i\in\mathcal{E}_{v'}}{max_{e_i}} Q(v',e_i)$;
3) the immediate reward $r$;
and 4) a learning rate $\alpha$ and a discount rate $\gamma$.
In the next step, the updated Q-value and the probability distribution are used to compute the average reward $\overline{r}$ (Line 11),
where $\mathcal{E}_v$ is the set of available routes in base $v$.
In Lines 12 and 13, the probability of selecting each route $i\in\mathcal{E}_v$ is updated.
This update is based on: 1) the current probability of each route being selected $\pi(v,e_i)$;
2) the current Q-value of each route $Q(v,e_i)$;
3) the average reward $\overline{r}$; and
4) a learning rate $\zeta$.
In Line 14, the updated probability distribution is normalized to be valid,
meaning that for each $i\in\mathcal{E}_v$, $0<\pi(v,e_i)<1$ and $\sum_{e_i\in\mathcal{E}_v}\pi(v,e_i)=1$.
In Line 15, the new base, $v'$, is set as the current base.
The above steps are iterated over until the goal state is reached.
Finally, in Line 16, agent $b$ marks each of the routes with the learned probability distributions.

\subsubsection{Using the exponential mechanism to redistribute probabilities}
After using the reinforcement learning algorithm to replace distance information with probability distributions, 
agents' local distance information can be hidden. 
Hiding distance information can reduce the risk of leaking this information 
but cannot guarantee the privacy preservation of this information. 
Therefore, we adopt the exponential mechanism to redistribute probabilities.

We use an example to explain how to use the exponential mechanism to redistribute probabilities. 
Suppose a node in a local map has two adjacent edges, $x$ and $y$, 
and the probabilities of selecting $x$ and $y$ are $0.7$ and $0.3$, respectively. 
Based on the definition of exponential mechanism, the exponential mechanism selects and outputs an element $r$ with probability proportional to $exp(\frac{\epsilon u_r}{2\Delta u})$, 
where $\epsilon$ is the privacy budget, $u_r$ is the utility of selecting $r$ and $\Delta u$ is the sensitivity of utility. 
If we set the utility of selecting a route to be the probability of selecting that route, 
then we have: $u_x=0.7$ and $u_y=0.3$, and in this setting, $\Delta u=1$. 
Then, if we set $\epsilon=2$, we have $exp(\frac{\epsilon u_x}{2\Delta u})=2.014$ and $exp(\frac{\epsilon u_y}{2\Delta u})=1.350$. 
Finally, the probabilities of selecting $x$ and $y$ become $\frac{2.014}{2.014+1.350}=0.6$ and $\frac{1.350}{2.014+1.350}=0.4$, respectively. 
The above process is performed on each node in the local map.

Another simple way to preserve the distance information privacy is to let each agent use the Dijkstra's algorithm \cite{Dijkstra59} 
to compute the shortest route length between two logistic centers in its local area 
and add a Laplace noise to that length. 
However, other agents may still get an approximate idea about the route length. 
For example, after adding a Laplace noise, the route length changes from $100$ to $105$. 
Although other agents cannot deduce the real length, 
they can still guess that the real length must be near $105$. 
In some situations, e.g., the military logistic example, an approximate length is good enough for other agents. 
By contrast, if an agent uses reinforcement learning and shares only probabilities, 
other agents cannot obtain even an approximate length. 
This idea is based on the spirit of federated learning by allowing agents to share only parameters \cite{Yang19}. 
In federated learning, to protect each client’s training data privacy, 
each client only sends the model parameters, trained based on her private data, to the server. 
The server, thus, has only clients' model parameters without any clients' private data.

\subsection{Step 3: Creating a complete plan}
After receiving obfuscated local maps from intermediate agents,
agent $a$ creates a logistic map by combining these obfuscated local maps, as shown in Fig. \ref{fig:overviewObfuscated}. 
On each obfuscated local map, although both real and fake nodes and edges are involved,
agent $a$ is unable to determine whether a given node or edge is real.
More detailed discussion on this matter will be presented in Section \ref{sec:theoretical analysis}.
\begin{figure}[ht]
\centering
	\includegraphics[scale=0.45]{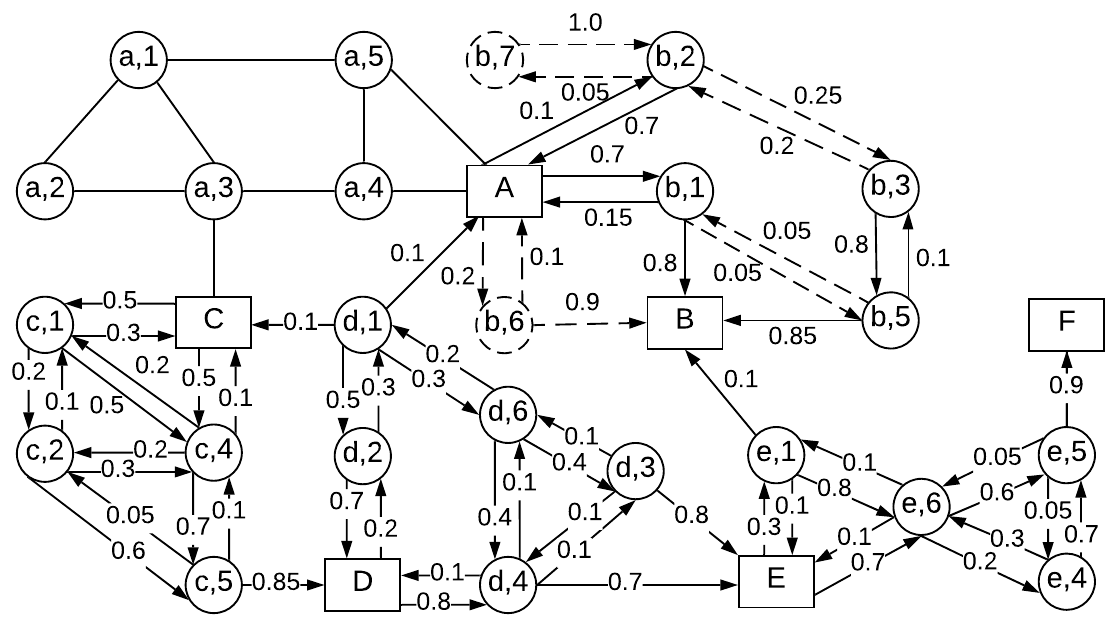}
	\caption{A logistic map created by obfuscated local maps}
	\label{fig:overviewObfuscated}
\end{figure}

Next, agent $a$ uses a reinforcement learning algorithm to calculate the length of the route between each pair of connected logistic centers, e.g., $A\rightarrow B$, $B\rightarrow E$ and so on.
The reinforcement learning algorithm is similar to Algorithm \ref{alg:RL}.
Since agent $a$ is only provided with probability distributions about the other areas,
agent $a$ must generate the distance information itself based on the probability distributions.
Agent $a$ relates the probabilities to the distance based on the average route length in agent $a$’s local area.
For example, in Fig. \ref{fig:overviewObfuscated}, the probabilities of selecting routes $A\sim (b,1)$ and $A\sim (b,6)$ are $0.7$ and $0.2$, respectively.
If the average route length in agent $a$’s local area is 45,
agent $a$ can simply set the distances from $A$ to $(b,1)$ and $A$ to $(b,6)$ to $20$ and $70$, respectively, whose average is $45$.
Here, we operate under the assumption that there are no significant differences between the average route length in each local area.

After agent $a$ calculates the length of the shortest route between each pair of connected logistic centers (as shown in Fig. \ref{fig:exampleA}), 
the shortest route from the origin to the destination can also be obtained.
It is clear at this point that this calculation is not very accurate,
as it is based on estimated length.
However, the aim of this calculation is not to find the real shortest route,
rather to select the intermediate agents which are located on the shortest route.
In Fig \ref{fig:exampleA}, the agents on the shortest route are: $b$, $e$ and $f$.

\begin{figure}[ht]
\centering
	\includegraphics[scale=0.28]{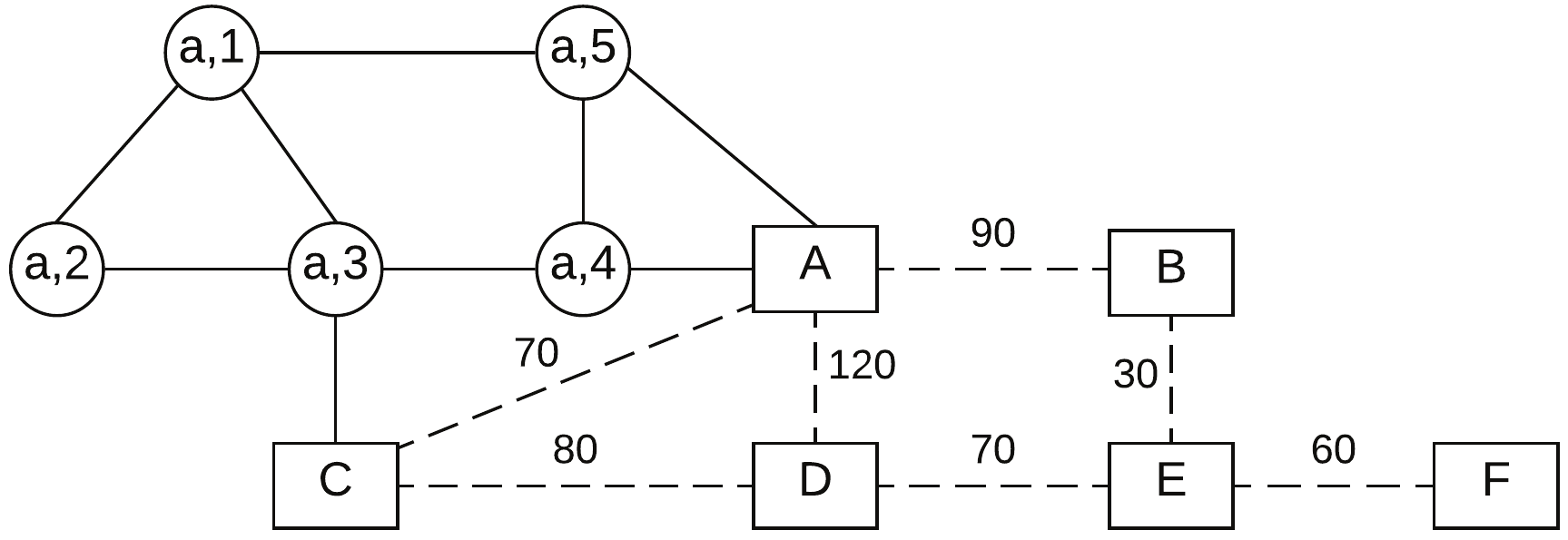}
	\caption{A high-level map featuring relative distances from agent $a$'s perspective}
	\label{fig:exampleA}
\end{figure}

The final plan, thus, can be expressed as $\Pi^{\rhd}_a=\langle\mathcal{I}\rightarrow (a,3)\rightarrow (a,4)\rightarrow A\rightarrow B\rightarrow E\rightarrow F\rightarrow\mathcal{G}\rangle$,
where $\mathcal{I}=\{package\_in\_(a,2)\}$ and $\mathcal{G}=\{package\_in\_(f,4)\}$.
In this plan, $\mathcal{I}\rightarrow (a,3)\rightarrow (a,4)\rightarrow A$ is the local plan formulated and carried out by agent $a$.
At logistic center $A$, agent $a$ gives its package to agent $b$, which makes a local plan to transport the package to logistic center $B$.
At center $B$, agent $e$ takes control of the package and devises a local plan to deliver the package to logistic center $F$.
Finally, agent $f$ picks up the package at center $F$ and makes a local plan to transfer the package to $(f,4)$.

\tianqing{at the end of this section, can we generate a new logistic map to show the 'new map' after the privacy preservation? }
\dayong{I have added a figure to show this new map.}

\subsection{A simplification of the proposed approach}
In some situations, if the distance information is not private, 
we can let logistic centers do the routing planning and consider the routing only between logistic centers. 
Each logistic center can directly communicate with the agent that is connected with the logistic center. 
As the distance information is not private, 
each logistic center is also aware of the local routing information within the agent. 
Compared with the proposed approach,
this simplified approach can 1) significantly reduce the problem complexity;
and 2) enable agents to obtain accurate distance information for further calculation.
and 3) fully hide the topology information belonging to each agent from other agents.

A typical example is daily logistic, where the distance information between two public places do not need to be hidden. In daily logistic, packages are transported from their starting points to their destinations across multiple states or provinces. Here, the distance among states/provinces is not a privacy concern and can be considered as public information. The simplified version of our approach can be applied to this example. Each state/province is assumed to have a logistic center. To transport a package, the logistic center at the starting point utilizes the accurate distance information among states/provinces to make an optimal global plan. Then, each logistic center in the global planning path conducts the local routing planning.

\section{Theoretical analysis}\label{sec:theoretical analysis}
\subsection{Soundness analysis}
\begin{thm}\label{thm:soundness}
The proposed approach is sound.
\end{thm}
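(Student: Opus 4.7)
The plan is to verify that, for any task meeting the paper's standing assumptions (bi-directional routes, one local area per agent, no isolated nodes), Algorithm \ref{alg:general} terminates with a plan whose actions, executed in order by the appropriate agents, carry the package from $\mathcal{I}$ to $\mathcal{G}$. Since soundness only requires the existence of at least one valid plan produced by the approach, the task is to exhibit such a plan rather than to prove optimality or uniqueness. I would decompose the argument along the three stages of the algorithm: high-level discovery, obfuscation, and composition of local plans.

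First, I would argue that Step 1 terminates with a chain of agents $a_0 = i, a_1, \ldots, a_k$ in which $a_k$ owns $\mathcal{V}_{\mathcal{G}}$ and every consecutive pair $(a_j, a_{j+1})$ shares a public logistic centre. This follows because the neighbour-query procedure is effectively a distributed breadth-first search on the agent-level neighbour graph; together with the no-isolated-node assumption and the fact that $\mathcal{V}_{\mathcal{G}}$ is private to exactly one agent, the search must reach that agent along some finite path, producing a well-defined high-level plan $\Pi^{\rhd}_i$ in which $\mathcal{V}_{\mathcal{I}}$ and $\mathcal{V}_{\mathcal{G}}$ are linked through a sequence of public nodes.

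Next, I would show that Steps 2 and 3 do not destroy this connecting structure. The Laplace obfuscation in Algorithm \ref{alg:DP} perturbs only the degree distribution of nodes inside an intermediate agent's private area; the public logistic centres on the boundary are preserved as the distinguished source and destination that the agent must publish routing probabilities for. Hence the merged global map assembled by agent $i$ still contains a path of public logistic centres from $i$'s local area to $a_k$'s local area, and the subsequent shortest-path computation over this map merely restricts the chain to some subsequence $a_0, a_x, \ldots, a_y, a_k$, which is itself a valid chain of neighbours in the agent-level graph. I would then construct the executable plan by composing local plans: each selected agent uses its \emph{real} (non-obfuscated) local map to plan between the two adjacent public centres it is responsible for; because its local area is connected and contains both centres as genuine nodes, a valid local path exists and can be produced by any sound single-agent planner. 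Concatenating $a_0$'s local plan from $\mathcal{V}_{\mathcal{I}}$ to its first public centre, each intermediate agent's centre-to-centre local plan, and $a_k$'s local plan from its last public centre to $\mathcal{V}_{\mathcal{G}}$ yields a single sequence of actions whose preconditions and effects chain correctly from $\mathcal{I}$ to $\mathcal{G}$.

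The main obstacle I expect is the second paragraph: cleanly formalising that the Laplace-perturbed local maps used in agent selection do not compromise the ability to realise a real route. The clearest way to handle this is to observe that an intermediate agent never has to execute movements on its obfuscated map; obfuscation governs only which chain of intermediate agents is chosen, while actual transport of the package uses the agent's true local graph. Consequently any noise-induced distortion shifts the agents selected but not the existence of an underlying physical route, and soundness reduces to the straightforward fact that every agent can route a package between two connected nodes of its own real local area.
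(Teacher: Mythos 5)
Your argument is correct, but it proves considerably more than the paper does for this theorem, and the decomposition is different. The paper's own proof of soundness is confined entirely to your first paragraph's content: it observes that in Step 1 each queried agent sets up a link back to the querying agent, so all queried agents are reachable, and hence once a goal agent (one whose private facts include the goal state) is identified, at least one plan connecting the initial agent to the goal agent through some of the queried agents must exist. That is the whole proof --- soundness in this paper is only the \emph{existence} of a valid plan, and the paper discharges it with the Step~1 reachability argument alone. Everything in your second and third paragraphs --- that Laplace obfuscation preserves the boundary logistic centres and connectivity of each local map, and that each selected agent executes on its real (non-obfuscated) local graph so local plans compose into an executable global plan --- is exactly the content the paper defers to Lemma~\ref{lem:step3} and Theorem~\ref{thm:completeness}, where it is used to show the approach actually \emph{creates} such a plan. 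So your proof is essentially a merged soundness-plus-completeness argument: it buys a self-contained constructive existence proof and makes explicit the key observation (obfuscation only steers agent selection, never execution) that the paper leaves somewhat implicit, at the cost of doing work the theorem does not require. One caveat common to both your proof and the paper's: reaching the goal agent in Step~1 tacitly requires the agent-level neighbour graph to connect the initial and goal agents, which does not follow from the stated no-isolated-node assumption; the paper hedges this with ``if a goal agent is identified,'' and you would need a similar hedge or an explicit connectivity assumption.
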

\begin{proof}
We prove this theorem by considering one task, e.g., delivering one package in the logistics example.
In Step 1 of our approach, we start from the initial agent
which has a task to complete and initializes a plan,
each queried agent sets up a link to the querying agent.
Thus, all the queried agents are reachable.
If a goal agent is identified whose private facts include the goal state,
there must be at least one plan connecting the initial agent to the goal agent through some or all of the queried agents.
\end{proof}

\subsection{Completeness analysis}
\begin{lem}\label{lem:step3}
Obfuscating local maps does not affect the completeness of the proposed approach.
\end{lem}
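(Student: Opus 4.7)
The plan is to reduce completeness to two ingredients: soundness (already established by Theorem~\ref{thm:soundness}) and the guaranteed existence of at least one realizable plan after obfuscation. Recall that in the framework of Section~\ref{sec:method}, an end-to-end plan is assembled from (i) the initial agent's local plan, (ii) a sequence of local plans produced by each intermediate agent between two public gateways in its area, and (iii) the goal agent's local plan. Since soundness already supplies a valid non-obfuscated witness, the lemma reduces to showing that for every intermediate agent's obfuscated local map, the two public gateways that serve as entry and exit for that agent are still connected by some path, and that no public gateway appearing on the high-level route is removed by obfuscation.

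For the first step I would observe an invariance property of Algorithm~\ref{alg:DP}: by Definition~\ref{def:privacy}, obfuscation acts only on the private tuple $\langle \mathcal{V}_i,\mathcal{E}_i,L(\mathcal{E}_i)\rangle$, while $\mathcal{V}_{Pub}$ is untouched. In particular, the pair of public logistic centers that an intermediate agent~$i$ must connect are identical in Figs.~\ref{fig:beforeDP} and~\ref{fig:afterDP}. Thus each intermediate agent's routing sub-problem is the same ``source/sink'' pair as in the unobfuscated graph, and the only thing that can go wrong is that the rewiring in Line~7 of Algorithm~\ref{alg:DP} destroys every path between them.

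The second step is to rule out this failure mode. The rewiring is performed by the $1K$-distribution generator of~\cite{Maha06} applied to the perturbed sequence $\tilde{P}(k)=\lceil P(k)+Lap(\Delta S\cdot d_{max}/\epsilon)\rceil$, and under the Section~\ref{sec:method} assumption that no isolated nodes exist, every public gateway retains at least one incident edge after rewiring. Consequently the obfuscated local graph admits a path between the two public gateways; fake nodes and fake edges introduced by rewiring may lengthen this path but cannot eliminate it. Concatenating such a path across all intermediate agents with the initial and goal agents' local plans produces a valid end-to-end plan, which is what completeness demands.

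The hardest step will be justifying that the perturbed degree sequence $\tilde{P}(k)$ always yields a connected rewired graph. Three pathologies must be addressed: (a)~the Laplace draw making $\tilde{P}(k)$ negative, which is absorbed by the ceiling and, if still non-positive, by clipping to zero without reducing the count of public gateways; (b)~an odd total degree sum produced by the independent perturbations, which the generator of~\cite{Maha06} repairs by adjusting a single entry; and (c)~the degenerate case in which the support of $\tilde{P}$ is concentrated on degree one, which is excluded by the no-isolated-nodes assumption together with the fact that each public gateway inherits at least two local incident edges from the original map. Once these three cases are ruled out, a path between the two gateways exists in every intermediate obfuscated map, and the lemma follows.
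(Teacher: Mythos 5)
Your proof follows essentially the same route as the paper's own: both argue that because $P(0)$ is excluded from the perturbed degree distribution, the rewired local map contains no isolated nodes and therefore remains connected, so a path between the two public logistic centers survives obfuscation and completeness is unaffected. The only substantive difference is that you flag the connectivity claim as the hardest step and try to patch it via edge cases, yet neither your justification (each gateway retains an incident edge) nor the paper's (the map is undirected) actually rules out the two gateways landing in different connected components after rewiring, so both arguments ultimately lean on the same unstated property of the generator of \cite{Maha06}.
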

\begin{proof}
In Step 2 of our approach, each intermediate agent obfuscates its local map
by adding and/or removing nodes and/or edges (see Algorithm \ref{alg:DP}).
During the obfuscation process, Laplace noise is added to the node degree distribution of the local map: $P(1)$, ..., $P(d_{max})$.
As $P(0)$ is not counted, isolated nodes will not be created.
Moreover, as the obfuscated map is undirected,
it can be guaranteed that the obfuscated map will be connected.
Hence, there must be at least one route between the two logistic centers on the local map.
Since this property is common to the local maps of all intermediate areas,
there must be at least one route from the initial area to the goal area via intermediate logistic centers.
Thus, the completeness is not affected.
\end{proof}

\begin{thm}\label{thm:completeness}
The proposed approach is complete.
\end{thm}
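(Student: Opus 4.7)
The plan is to derive Theorem \ref{thm:completeness} by combining Theorem \ref{thm:soundness} with Lemma \ref{lem:step3}, since by the definition of completeness we need only establish (1) soundness and (2) the guaranteed existence of a valid plan produced by the approach.

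First I would observe that part (1) is immediate from Theorem \ref{thm:soundness}, so the work consists of verifying that every stage of the approach terminates with output that can be composed into an executable plan. I would walk through the three steps in order. In Step 1, the query-and-propagation procedure is exhaustive over neighbors, so under the assumption (stated in Section \ref{sec:method}) that there are no isolated nodes on the global map, the goal agent is eventually located and a high-level abstract plan of the form $\langle\mathcal{V}_{\mathcal{I}}\rightarrow v_j,\ldots,v_k\rightarrow\mathcal{V}_{\mathcal{G}}\rangle$ is instantiated. In Step 2, each intermediate agent produces an obfuscated local map; here I would invoke Lemma \ref{lem:step3} to conclude that the obfuscation preserves connectivity between the two logistic centers that the agent must traverse, so the local routing subproblem always admits a solution. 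Moreover, Algorithm \ref{alg:RL} terminates whenever $v=B$ is reached, which Lemma \ref{lem:step3} guarantees is reachable, and thus the assignment of probability distributions (subsequently reweighted by the exponential mechanism) completes for every intermediate agent.

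For Step 3, I would argue that after agent $i$ collects the obfuscated maps, the composite high-level map remains connected from $\mathcal{V}_{\mathcal{I}}$ to $\mathcal{V}_{\mathcal{G}}$: the original (unobfuscated) composition was connected by construction of Step 1, and Lemma \ref{lem:step3} ensures each obfuscated local substitution retains a route between the two public nodes that interface with neighboring agents. Hence the reinforcement-learning based refinement selects a nonempty sequence of intermediate agents $b,\ldots,f$, and concatenating the local plan produced by $i$ with the local plans subsequently produced by each selected intermediate agent yields a valid plan $\Pi^{\rhd}_i$ from $\mathcal{I}$ to $\mathcal{G}$. Combined with Theorem \ref{thm:soundness}, both conditions of completeness are satisfied.

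The main obstacle I anticipate is the Step 3 argument, because one might worry that the obfuscation-induced fake nodes and edges, or the probability-based (rather than distance-based) route selection, could mislead agent $i$ into choosing intermediate agents whose local plans cannot actually connect to their neighbors' local plans. I would handle this by emphasizing that the interfaces between agents are the \emph{public} logistic centers in $\mathcal{V}_{Pub}$, which are not affected by any agent's private obfuscation, and that Lemma \ref{lem:step3} guarantees each intermediate agent's local map still contains a route between the two public centers it is asked to link. Therefore, however suboptimal the chosen sequence may be, the concatenated plan is always executable, which is all that completeness requires.
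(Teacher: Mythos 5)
Your proposal is correct and follows essentially the same route as the paper: reduce completeness to soundness (Theorem \ref{thm:soundness}) plus the guaranteed construction of a valid plan, use Lemma \ref{lem:step3} to ensure the obfuscated local maps still connect the relevant public logistic centers, and compose the resulting local plans into a complete plan. The only point the paper makes explicit that you leave implicit is the honest-but-curious assumption, which it invokes to justify that each intermediate agent's local plan is actually valid; your connectivity argument covers existence, but validity of the executed local plans still rests on that honesty assumption.
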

\begin{proof}
Step 1 of our approach guarantees that a goal agent can be found.
According to Theorem \ref{thm:soundness}, there must be at least one plan connecting the initial agent to the goal agent.
We now need only to prove that our approach is capable of finding at least one of these plans.

According to Lemma \ref{lem:step3}, there is at least one route from the initial area to the goal area.
One of these routes can be treated as a high-level plan,
which can be identified using Algorithm \ref{alg:RL}.
Based on the high-level plan, each intermediate agent creates a local plan (Step 3).
Given that each agent is honest\footnote{It is a common assumption in privacy-preserving multi-agent planning that agents are honest but curious about others' private information \cite{Torreno17}.}, each local plan is valid,
which ensures that the two logistic centers in the local area will be connected.
Therefore, a high-level plan and a set of local plans constitute a complete plan.
\end{proof}

\subsection{Privacy-preserving analysis}
\begin{thm}\label{thm:DP}
The proposed planning approach satisfies $\epsilon$-differential privacy.
\end{thm}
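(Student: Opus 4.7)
The plan is to decompose the approach into the two places where randomness is introduced in the per-agent obfuscation pipeline: (i) the Laplace noise added to the $1K$-distribution in Algorithm \ref{alg:DP}, and (ii) the exponential mechanism used to redistribute the route-selection probabilities in Step 2. I would argue that each step is individually $\epsilon_i$-differentially private with respect to node-neighboring maps (per Definition \ref{Def-DP-node}), and then invoke a composition argument to conclude that the full pipeline satisfies $\epsilon$-differential privacy for a suitable budget.

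First I would handle the Laplace step. The query here is the vector-valued $1K$-distribution $f(D)=(P(1),\ldots,P(d_{max}))$. Since two neighboring maps differ by a single node, adding or removing that node alters at most a bounded number of degree buckets; the paper already fixes $\Delta S=1$ for each component. Algorithm \ref{alg:DP} draws independent Laplace noise $Lap(\Delta S\cdot d_{max}/\epsilon)$ for each of the $d_{max}$ coordinates, so by the standard Laplace-mechanism argument applied coordinate-wise and the sequential composition theorem across the $d_{max}$ coordinates, the released noisy distribution $(\tilde P(1),\ldots,\tilde P(d_{max}))$ is $\epsilon$-differentially private. The subsequent node rewiring is a deterministic post-processing of the noisy counts, so by the post-processing immunity of differential privacy it does not affect the guarantee.

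Next I would handle the exponential step. Viewing each local edge/route as a record, the utility function $u(D,r)$ used to select a route $r$ (the probability produced by the reinforcement learner) has bounded sensitivity $\Delta u$, which by Definition \ref{Def-Ex} yields $\epsilon$-differential privacy for each redistribution of probabilities at a node. Since this redistribution is performed independently at each node of an agent's obfuscated local map and the per-node neighborhoods partition the input, I would invoke parallel composition across nodes so that the cost does not scale in the number of nodes. The reinforcement-learning stage that produces the utilities operates on already-obfuscated data (the output of Algorithm \ref{alg:DP}), so again by post-processing it introduces no additional privacy cost before the exponential mechanism is applied.

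Finally, I would combine the two independent mechanisms per agent via sequential composition: the pipeline releases the noisy $1K$-distribution (via Laplace) and the redistributed selection probabilities (via the exponential mechanism), and composition yields an overall privacy budget equal to the sum of the two budgets. Because different agents obfuscate disjoint private inputs $\langle \mathcal{V}_i,\mathcal{E}_i,L(\mathcal{E}_i)\rangle$ (Definition \ref{def:privacy}), the across-agent release is again a parallel composition, so the global approach satisfies $\epsilon$-differential privacy. The main obstacle I expect is justifying the sensitivity claims precisely: specifically, arguing that under node-neighboring maps the $1K$-distribution really changes by only $\Delta S=1$ in each coordinate (one must account for the degree changes induced in the neighbors of the inserted/removed node), and bounding $\Delta u$ for the learned routing utilities; if that worst-case change is larger than stated, the stated budget would need to be scaled accordingly.
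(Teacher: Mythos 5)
Your proposal follows essentially the same route as the paper's proof: per-coordinate Laplace noise on the $1K$-distribution combined by sequential composition over the $d_{max}$ degree buckets, per-node exponential-mechanism redistribution composed sequentially over a node's $k$ adjacent edges and in parallel across nodes, and parallel composition across the agents' disjoint local areas. The two caveats you flag --- that under node-neighboring maps the removal or insertion of one node also shifts the degree buckets of all its neighbors (so the claimed per-coordinate sensitivity $\Delta S=1$ needs more justification) and that sequentially composing the Laplace and exponential releases should sum their budgets rather than still yield $\epsilon$ --- are both points the paper's own proof asserts without addressing, so they are fair criticisms of the paper rather than gaps in your argument relative to it.
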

\begin{proof}
To analyze the privacy guarantee, we apply two composite properties of the privacy budget: the sequential and the parallel compositions \cite{McSherry200794}.
The sequential composition determines the privacy budget $\epsilon$ of each step
when a series of private analysis are performed sequentially on a dataset.
The parallel composition corresponds to the case in which each private step is applied to disjoint subsets of a dataset.
The ultimate privacy guarantee depends on the step which has the maximal $\epsilon$.

In the proposed approach, the Laplace mechanism and the exponential mechanism consumes the privacy budget.
In the Laplace mechanism in Algorithm \ref{alg:DP}, the Laplace noise sampled from $Lap(\frac{\Delta S\cdot d_{max}}{\epsilon})$ is added in $d_{max}$ steps.
At each step, the Laplace mechanism consumes the $\frac{\epsilon}{d_{max}}$ privacy budget;
thus for each step, Algorithm \ref{alg:DP} satisfies $\frac{\epsilon}{d_{max}}$-differential privacy.
By using the sequential composition property, we can conclude that
at a total of $d_{max}$ steps, the Laplace mechanism consumes the $d_{max}\cdot\frac{\epsilon}{d_{max}}=\epsilon$ privacy budget,
meaning that Algorithm \ref{alg:DP} satisfies $\epsilon$-differential privacy. 
By comparing Definition \ref{Def-DP} with Definition \ref{Def-DP-node}, 
since the Laplace mechanism can guarantee the data record privacy of a dataset, 
it can also guarantee the node-privacy of a graph.

The exponential mechanism is used to redistribute probabilities on each agent's local graph. 
For a given node in a local graph, suppose the node has $k$ adjacent edges. 
Then, the exponential mechanism will be used $k$ times. 
If we set privacy budget for this node to be $\frac{\epsilon}{k}$, 
based on the sequential composition property, the privacy consumption of this node is $\epsilon$. 
Thus, the probability redistribution on the adjacent edges of this node satisfies $\epsilon$-differential privacy. 
When this method is used on every node, 
based on the parallel composition property, the probability redistribution on this local graph satisfies $\epsilon$-differential privacy.

Since the Laplace mechanism and the exponential mechanism are used by each agent,
each agent is guaranteed $\epsilon$-differential privacy. 
Although an environment may contain multiple agents,
each agent maintains a local area, and these local areas are disjoint with each other.
Since each agent is guaranteed $\epsilon$-differential privacy,
according to the parallel composition property, the proposed approach satisfies $\epsilon$-differential privacy.

\end{proof}

\textbf{Remark 1}: In Algorithm \ref{alg:DP}, Laplace noise is used to randomize the node degree distribution.
This implies that both the number of nodes and the number of edges in a local map will be perturbed.
Since the topology of a map consists of nodes and edges,
perturbing the numbers of nodes and edges incurs perturbation of the topology.
Accordingly, as Algorithm \ref{alg:DP} satisfies differential privacy,
the perturbation of the topology of a map also satisfies differential privacy.

\begin{cor}\label{cor:granularity}
No agent is able to conclude anything about the existence of any subset of $\lceil\frac{\Delta S\cdot d_{max}}{\epsilon}\rceil$ nodes in another agent's map.
\end{cor}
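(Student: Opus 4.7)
The plan is to derive this corollary as a direct consequence of Theorem~\ref{thm:DP} together with the specific scale of the Laplace noise used in Algorithm~\ref{alg:DP}. The first task is to pin down what ``cannot conclude anything about the existence'' means formally. I would interpret it as statistical indistinguishability: for any two local maps $G$ and $G'$ that differ by the addition or removal of a subset $S$ of at most $\lceil \frac{\Delta S \cdot d_{max}}{\epsilon} \rceil$ nodes (and the edges incident to them), the output distributions of Algorithm~\ref{alg:DP} on $G$ and on $G'$ must be close enough that no hypothesis test on the published obfuscated map can reliably detect whether $S$ was present.

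The core of the argument is a noise-magnitude calculation. By Theorem~\ref{thm:DP}, Algorithm~\ref{alg:DP} is $\epsilon$-node-DP for single-node neighbors, and the Laplace noise added to each bucket $P(k)$ of the $1K$-distribution has scale $b=\frac{\Delta S\cdot d_{max}}{\epsilon}$. The step I would carry out is: (i) show that removing a subset of $s$ nodes changes the degree-distribution vector $(P(1),\dots,P(d_{max}))$ by at most $\Delta S \cdot s$ in the $\ell_1$ sense, because each removed node contributes at most $\Delta S$ change to the histogram by definition of sensitivity; (ii) bound the log-likelihood ratio $\log \frac{\Pr[\mathcal{M}(G)=\tilde P]}{\Pr[\mathcal{M}(G')=\tilde P]}$ by summing the per-coordinate contributions from the Laplace density, which each take the form $\frac{|P(k)-P'(k)|}{b}$; (iii) conclude that when $s \leq \lceil b/\Delta S \rceil$ (and in particular when $s\leq\lceil b\rceil$ after accounting for the ceiling in Line~6 of Algorithm~\ref{alg:DP}), this ratio stays within a small constant factor, i.e.\ the posterior over ``does $S$ exist in $G$?'' is essentially the prior. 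This is the same flavour of reasoning as the group-privacy property, but tied directly to the Laplace scale rather than to an iterated $s\epsilon$ bound, which is what makes the exact threshold $\lceil \frac{\Delta S\cdot d_{max}}{\epsilon}\rceil$ come out cleanly.

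The hard part will be handling the interaction between the ceiling operator and the node-rewiring step (Line~7 of Algorithm~\ref{alg:DP}), because the rewiring introduces additional randomness that is conditional on the noisy degree sequence and could in principle leak information about $G$ beyond what the $1K$-distribution itself reveals. To close this gap I would invoke the post-processing immunity of differential privacy: once $\tilde P$ is produced by the $\epsilon$-DP mechanism, any subsequent randomised procedure acting on $\tilde P$ alone (such as the graph-model generator of \cite{Maha06}) preserves the same privacy guarantee, so the indistinguishability bound from the previous paragraph transfers to the final obfuscated map released to other agents. Finally, the phrase ``another agent's map'' is handled by parallel composition, exactly as in Theorem~\ref{thm:DP}, since the local areas of distinct agents are disjoint and each agent runs its own independent instance of Algorithm~\ref{alg:DP}.
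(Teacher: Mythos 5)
Your proof is correct in substance but argues the corollary along a genuinely different line from the paper. The paper's own proof is an expected-noise-magnitude argument: since the noise in Line 6 of Algorithm \ref{alg:DP} is drawn from $Lap(\frac{\Delta S\cdot d_{max}}{\epsilon})$, the expected magnitude of the perturbation to the node counts is $\frac{\Delta S\cdot d_{max}}{\epsilon}$, hence any subset of $\lceil\frac{\Delta S\cdot d_{max}}{\epsilon}\rceil$ nodes ``could be fake,'' and by Theorem \ref{thm:DP} together with Definition \ref{Def-DP-node} an agent cannot distinguish real from fake nodes. You instead formalize ``cannot conclude anything'' as indistinguishability of maps differing in a subset $S$, and you derive the threshold from a direct likelihood-ratio (group-privacy) computation against the Laplace scale $b$, then carry the bound through the rewiring step by post-processing immunity and across agents by parallel composition. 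What your route buys: it is quantitative (the privacy loss at subset size $s$ is about $s\Delta S/b=s\epsilon/d_{max}$, so at the threshold the two output distributions differ by a factor of roughly $e$), and it makes explicit two steps the paper leaves tacit --- that Line 7 of Algorithm \ref{alg:DP} must act on the noisy histogram $\tilde{P}$ alone for the guarantee to survive, and that disjointness of local areas is what lets the per-agent claim extend to ``another agent's map.'' What the paper's route buys is brevity and a direct link to the intuition that the number of injected or removed nodes is, in expectation, exactly the stated subset size. Two caveats on your version: step (i) inherits the paper's assumption that a single node changes the degree histogram by only $\Delta S=1$ (ignoring the induced degree changes of its neighbors), and your conclusion is a constant-factor indistinguishability rather than the absolute ``conclude nothing'' of the statement --- but the paper's own proof is no stronger on either point.
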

\begin{proof}
In Algorithm \ref{alg:DP}, the Laplace noise is sampled from $Lap(\frac{\Delta S\cdot d_{max}}{\epsilon})$,
meaning that the expected amount of noise is $\frac{\Delta S\cdot d_{max}}{\epsilon}$.
As this noise is used to change the number of nodes in a map (recall Lines 5-6 in Algorithm \ref{alg:DP}),
the expected number of nodes that will be changed is $\lceil\frac{\Delta S\cdot d_{max}}{\epsilon}\rceil$.
Therefore, any subset of $\lceil\frac{\Delta S\cdot d_{max}}{\epsilon}\rceil$ nodes could be fake nodes.
According to Definition \ref{Def-DP-node} and Theorem \ref{thm:DP},
since Algorithm \ref{alg:DP} can guarantee the node-privacy of a graph,
an agent will be unable to distinguish real from fake statistical information between two neighboring graphs, e.g., the number of real nodes.
This means that an agent cannot determine	whether or not a node is fake.
Hence, the existence of any subset of $\lceil\frac{\Delta S\cdot d_{max}}{\epsilon}\rceil$ nodes in an agent's map
cannot be concluded by any other agents.
\end{proof}

\textbf{Remark 2}: From Corollary \ref{cor:granularity}, in the Laplace mechanism in Algorithm \ref{alg:DP}, 
the value of $\epsilon$ controls the granularity of privacy,
given that the values of $\Delta S$ and $d_{max}$ have been fixed.
A smaller $\epsilon$ implies a stronger privacy guarantee.
However, a smaller $\epsilon$ also introduces a larger amount of noise.
The increase of the amount of noise reduces the usability of a map.
Thus, the value of $\epsilon$ should be carefully set.

\textbf{Remark 3}: Similar to the Laplace mechanism, in the exponential mechanism, 
the value of $\epsilon$ has a huge impact on probability redistribution results. 
Given that a node has $k$ adjacent edges and the probabilities of selecting the $k$ edges are $u_1,...,u_k$, 
if we set $\epsilon=0$, the probability of selecting each edge will equally become $\frac{1}{k}$; 
if we set $\epsilon\rightarrow +\infty$, probability $u_m$ becomes $1$ and others become $0$, 
where $u_m=max\{u_1,...,u_k\}$. 
In addition to the two extreme situations, there is a median situation 
which is that the redistributed probabilities are identical to the original probabilities: $u'_1=u_1,...,u'_k=u_k$. 
Based on the computation method described in Section \ref{sub:Step2}, each probability $u'_i$, $1\leq i\leq k$, is computed as: 
\begin{equation}
u'_i=\frac{exp(\frac{\epsilon u_i}{2\Delta u})}{\sum_{1\leq j\leq k}exp(\frac{\epsilon u_j}{2\Delta u})}. 
\end{equation}
Let each $u'_i=u_i$, we have $k$ equations.
\begin{equation}\nonumber
\left\{
             \begin{array}{lr}
             \frac{exp(\frac{\epsilon u_1}{2\Delta u})}{\sum_{1\leq j\leq k}exp(\frac{\epsilon u_j}{2\Delta u})}=u_1, &  \\
             ..., & \\
             \frac{exp(\frac{\epsilon u_k}{2\Delta u})}{\sum_{1\leq j\leq k}exp(\frac{\epsilon u_j}{2\Delta u})}=u_k. &  
             \end{array}
\right.
\end{equation}
In our problem, $\Delta u=1$. By solving the $k$ equations, we have that 
\begin{equation}\nonumber
\epsilon_i=\frac{2(k\cdot ln(u_i)-\sum_{1\leq j\leq k}ln(u_j))}{k\cdot u_i-\sum_{1\leq j\leq k}u_j}, 
\end{equation}
where $1\leq i\leq k$. 
Thus, in applications, on one hand, these values of $\epsilon$ should be avoided, 
as they will make the redistributed probabilities identical to the original probabilities, 
which cannot offer any privacy preservation. 
On the other hand, the values of $\epsilon$ should be set close to these values 
to guarantee the usability of the redistributed probabilities.


\begin{thm}\label{thm:strong}
The proposed planning approach can strongly preserve agents' privacy.
\end{thm}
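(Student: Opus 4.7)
The plan is to reduce strong privacy, as given in Definition \ref{def:strong}, to the differential-privacy guarantee already established in Theorem \ref{thm:DP} together with the granularity bound in Corollary \ref{cor:granularity}. Recall that by Definition \ref{def:privacy} an agent's private information is the triple $\langle\mathcal{V}_i,\mathcal{E}_i,L(\mathcal{E}_i)\rangle$, and that the only information an outside agent ever receives about $i$'s local area is (a) the obfuscated degree-distribution graph produced in Algorithm \ref{alg:DP} and (b) the redistributed edge-selection probabilities produced by the exponential mechanism in Step 2. So it suffices to argue that neither piece of public output lets any curious, honest agent infer any of the three components of $i$'s private tuple.

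First I would handle $\mathcal{V}_i$ and $\mathcal{E}_i$ jointly. By the Remark following Theorem \ref{thm:DP}, Algorithm \ref{alg:DP} perturbs the $1K$-distribution, which in turn perturbs both the number of nodes and the number of edges, and by Theorem \ref{thm:DP} the released obfuscated graph satisfies $\epsilon$-node-differential privacy. Appealing to Definition \ref{Def-DP-node}, for any two neighboring local graphs the output distributions are within an $e^{\epsilon}$ multiplicative factor, so no posterior reasoning by an outside agent can distinguish them with advantage beyond this bound; in particular, by Corollary \ref{cor:granularity}, whole subsets of size $\lceil\Delta S\cdot d_{\max}/\epsilon\rceil$ are indistinguishable from fakes. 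Hence the true $\mathcal{V}_i$ and, because node-differential privacy hides all edges incident to any node, the true $\mathcal{E}_i$ cannot be recovered from the public graph.

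Next I would handle $L(\mathcal{E}_i)$. The key observation is that the lengths never leave agent $i$: they are first converted into selection probabilities by Algorithm \ref{alg:RL}, then those probabilities are themselves released only after being passed through the exponential mechanism. By Theorem \ref{thm:DP}, this probability redistribution satisfies $\epsilon$-differential privacy on every node of the local map (sequential composition across a node's adjacent edges, parallel composition across nodes). Thus any two length assignments that induce neighboring probability vectors yield $e^{\epsilon}$-indistinguishable outputs, which means the public probabilities cannot be inverted to recover $L(\mathcal{E}_i)$; the best an adversary can do is a guess bounded by the $\epsilon$ budget. One subtle point I would be careful with is the map-reconstruction step in which agent $a$ imputes distances from the received probabilities using its own local average route length: I would stress that this imputation is done unilaterally and uses only $a$'s private statistics, so it does not leak any new information about $i$.

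Finally I would assemble the three pieces: since $\mathcal{V}_i$, $\mathcal{E}_i$, and $L(\mathcal{E}_i)$ are each hidden behind an $\epsilon$-differentially private release, and since by the parallel composition property the overall release about agent $i$ remains $\epsilon$-differentially private, no other agent, regardless of reasoning power, can infer any private fact of $i$ from what it sees during planning. The main obstacle I expect is the conceptual bridge between a differential-privacy guarantee, which is quantitative and probabilistic, and the qualitative "cannot infer" wording of Definition \ref{def:strong}; I would address it by invoking the standard interpretation that $\epsilon$-indistinguishability of neighboring inputs implies that any inference procedure has bounded posterior advantage, which for the honest-but-curious model assumed in Theorem \ref{thm:completeness} is exactly what strong privacy requires.
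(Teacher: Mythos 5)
Your proposal is correct and takes essentially the same route as the paper: both decompose the private information into components, invoke Theorem~\ref{thm:DP} and Corollary~\ref{cor:granularity} to cover the nodes and edges, appeal to the replacement of lengths by exponentially-redistributed probabilities to cover $L(\mathcal{E}_i)$, and conclude via Definition~\ref{def:strong}. The only difference is that the paper's proof additionally lists the positions and movements of items as private facts and notes that their privacy follows from that of the nodes and edges, whereas you scope the argument to the 3-tuple of Definition~\ref{def:privacy}; your extra remarks on the distance-imputation step and on bridging the quantitative $\epsilon$-indistinguishability guarantee to the qualitative ``cannot infer'' wording are refinements the paper leaves implicit.
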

\begin{proof}
As defined in Section \ref{sub:planning}, an agent's private information includes
1) the number of nodes in an agent's local area, 
2) the number of edges in the local area,  
3) the length of these edges, 
4) the positions of any items in the local area and 
5) the movements of any items in the local area.
To prove this theorem,
we only need to prove that the private information possessed by an agent cannot be inferred by another agent.
First, according to Theorem \ref{thm:DP} and Corollary \ref{cor:granularity},
the proposed planning approach satisfies $\epsilon$-differential privacy
and guarantees the privacy of any subset of $\lceil\frac{\Delta S\cdot d_{max}}{\epsilon}\rceil$ nodes in an agent's local area.
By properly setting the value of $\epsilon$, the privacy of all nodes and edges in an agent's local area can be preserved.
Therefore, the privacy of the number of nodes and edges of an agent's local area will also be preserved.
Second, our approach dictates that the length information in a local area is replaced by probability distributions (recall Fig. \ref{fig:transfer}). 
Also, these probabilities are redistributed using the exponential mechanism. 
Thus, the length information is strictly hidden. 
Therefore, an agent cannot infer the real length of any individual edge in another agent's local area.
Third, since the privacy of any node or edge in an agent's local area has been preserved,
the positions and movements of items have also been preserved.
Based on the definition of strong privacy (Definition \ref{def:strong}),
the proposed approach can strongly preserve agents' privacy.
\end{proof}


\subsection{Communication analysis}
Let us suppose that there are $m$ logistic centers.
Each logistic center, $i$, has a capacity, $lc_i$,
which is the maximum number of agents that can share the logistic center.
Accordingly, we derive the following theorem:

\begin{thm}\label{thm:communication}
In Step 1, the upper bound of the number of communication messages used to find a goal agent is $\sum_{1\leq i\leq m}lc_i$.
\end{thm}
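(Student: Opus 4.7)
The plan is to model Step 1 as a query propagation process on a graph whose vertices are the agents and whose (hyper)edges correspond to shared logistic centers: two agents are linked through center $i$ iff they both operate at $i$. Since, by the definition of neighbor given earlier, all inter-agent communication in Step 1 goes through some shared logistic center, every single message can be attributed to exactly one center through which the sender and receiver are connected.

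I would then argue that, without loss of generality, the query need only be forwarded at most once through each center before all agents attached to that center have learned it. Concretely, once any agent at logistic center $i$ has received (or originated) the query, it suffices for it to broadcast that query to the other agents sharing $i$; any further re-transmission at the same center is redundant because every agent there already has the query in hand and can independently decide whether it is the goal agent or must propagate the query onward through \emph{other} centers it participates in. This gives at most $lc_i$ messages charged to center $i$ (one per agent attached to it, counting the initial broadcast).

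The theorem then follows by summing over centers: the total number of messages is at most $\sum_{1 \leq i \leq m} lc_i$, because each message is charged to a unique center and each center contributes no more than $lc_i$ messages. To rule out pathological loops, I would invoke the standard convention in distributed search that an agent which has already processed a given query does not re-process or re-forward it, so the propagation terminates without revisiting already-informed agents.

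The main obstacle I expect is the accounting, specifically the risk of double-counting a single message when its endpoints share more than one logistic center, and the need to justify the ``at most once per center'' reduction without assuming a particular search strategy (BFS vs.\ DFS). I would address this by fixing a deterministic charging rule: when sender and receiver share several centers, charge the message to, say, the lowest-indexed shared center; then the per-center bound $lc_i$ is preserved and the sum still dominates the total message count. Everything else in the argument is bookkeeping once this reduction is in place.
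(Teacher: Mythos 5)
Your proposal is correct and follows essentially the same route as the paper: the paper's proof also bounds the Step~1 message count by assuming every logistic center is used to its full capacity $lc_i$ and summing these per-center contributions, which is exactly your charging argument stated informally. Your version is in fact more careful than the paper's own proof, since the explicit charging rule for messages whose endpoints share several centers and the no-reprocessing convention that rules out loops are both left implicit there.
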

\begin{proof}
In our approach, each agent is only aware of the existence of its own neighbors.
This means that 1) each agent does not know how many neighbors any other agent has, and
2) each agent is not aware of how far away the goal agent is.

As the information regarding logistic centers is public,
all agents know the capacity of each logistic center.
Thus, to guarantee that the query message is able to reach the goal agent,
an agent must assume that 1) each logistic center is using up its capacity, and
2) the goal agent is located in the most distant area.
In this situation, the number of generated communication messages is $\sum_{1\leq i\leq m}lc_i$.
\end{proof}

\textbf{Remark 4}: Theorem \ref{thm:communication} describes the communication overhead in the worst case.
However, as time progresses, this communication overhead can be significantly reduced.
This is because an agent memorizes the plans that it has previously created,
meaning that an agent memorizes the routes to goal agents.
Thus, in the future, an agent can simply exploit a route previously determined to reach a goal agent without the need for communication.
Even if an agent decides to explore a new route,
the communication overhead can be limited by setting the maximum number of query messages during the finding process.
The maximum number of query messages is set to be identical to the number of messages used to find the same goal agent last time.
Formally, we have the following corollary:

\begin{cor}\label{cor:communication}
As time progresses, the communication overhead of each agent monotonically decreases.
\end{cor}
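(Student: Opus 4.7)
The plan is to prove this by fixing a goal agent and showing that the number of query messages spent by an initiating agent to reach that goal is non-increasing across successive attempts. Let $c_t$ denote the number of communication messages the initiating agent uses on its $t$-th planning task directed at the same goal agent. The base case $t=1$ is handled directly by Theorem \ref{thm:communication}: on the first attempt the agent has no cached knowledge, so $c_1 \leq \sum_{1\leq i\leq m} lc_i$.

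For the inductive step, I would split into the two scenarios described in Remark 4. In the exploitation scenario, the agent replays its memorized route to the goal agent, so no new query messages are issued during discovery; hence $c_{t+1} = 0 \leq c_t$. In the exploration scenario, the agent allows itself to search for an alternative route, but Remark 4 caps the number of query messages at the count used the previous time the same goal agent was located, giving $c_{t+1} \leq c_t$. Combining both cases yields $c_{t+1} \leq c_t$ for every $t \geq 1$, establishing monotonicity.

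The main obstacle is showing that imposing the cap in the exploration case does not silently violate completeness, because otherwise the agent might be forced to abort and the ``communication overhead'' would be undefined. I would resolve this by observing that the agent still possesses the cached route from the previous successful attempt, whose discovery consumed exactly $c_t$ messages; in the worst case the exploration procedure degenerates to re-traversing that cached route, so a valid plan is always obtainable within the budget, invoking Lemma \ref{lem:step3} and Theorem \ref{thm:completeness} for correctness of the resulting plan. This ensures the cap is always achievable without loss of completeness, so the monotonic bound $c_{t+1} \leq c_t$ holds unconditionally and the corollary follows.
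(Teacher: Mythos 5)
Your proof is correct and follows essentially the same route as the paper's: the key step in both is that Remark 4's cap on query messages during exploration equals the count used on the previous successful attempt, which immediately gives the non-increasing bound, while exploitation of a memorized route costs nothing. Your added observation that the cached route guarantees the cap is always achievable without sacrificing completeness is a point the paper's proof leaves implicit, but it does not change the argument.
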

\begin{proof}
Every time an agent explores a new route to a goal agent,
the maximum number of query messages is set to be equal to the number of messages used to find the same goal agent last time.
As each agent memorizes only the shortest routes to goal agents,
only routes that are shorter than these memorized routes will be taken by each agent.
This means that the number of request messages currently being used must be fewer than or equal to the number used previously.
Thus, the communication overhead of each agent monotonically decreases.
\end{proof}

In our approach, the setting of the communication budget $C$ can be controlled by the privacy budget $\epsilon$.
In a multi-agent system, each agent $k$ sets $\epsilon/C$ as their privacy budget
and ceases to communicate when $\epsilon$ is used up.
When $C>\sum_{1\leq i\leq m}lc_i$, the system can guarantee that all communication steps will be completed.
However, a large amount of noise will be added to the system under these circumstances.
When $C<\sum_{1\leq i\leq m}lc_i$, the system is likely to stop before finishing the communication steps.
However, the noise added to	 the system will be limited.
When $C=\sum_{1\leq i\leq m}lc_i$, the system will stop when all communication steps have been completed.
Therefore, by adjusting the privacy budget $\epsilon$ and the communication budget $C$,
the communication overhead of a multi-agent system can be controlled.

\section{Application of our approach to other domains}\label{sec:application}
This section illustrates how our approach can be applied to three other domains:
networks, air travel, and rovers.

\subsection{Packet routing in networks}
In a network, nodes often transmit packets between each other.
These nodes may belong to different areas,
which are connected by routers or access points.
In this domain, a router or access point can be thought of as similar to an agent,
which manages a corresponding area.
In a given area, the information possessed by each node, e.g., its load and performance, is private to the agent.
Moreover, the number of nodes in an area and their communication links are also private to the agent.
Thus, the agents expect that their privacy will be preserved.

As each node has only a limited range of communication, when a node transmits a packet to another node,
the packet may be relayed multiple times by intermediate nodes before reaching its destination.
Since it is highly desirable that nodes receive packets in a timely manner,
the transmission must be efficient so that huge delays can be avoided.
The proposed approach can be applied to create efficient plans for packet routing.

\subsection{Airplane transport}
The airplane transport problem consists of a set of planes and airports.
Moreover, the travel map is partitioned into a set of areas.
In the real world, each area can be thought of as a country.
Therefore, the planes and airports located in a given area are private to the area air traffic controller.
Clearly, each area controller wants to preserve information regarding the status and number of planes and airports in their area as private information.

The airports located on the boundary of two areas are public.
The goal is to transport passengers between airports.
In this problem, each area controller can be thought of as an agent.
When a plane travels from one airport to another,
as the plane has only limited fuel,
passengers may be transferred multiple times on their way to their destination.
Moreover, both area controllers and passengers would clearly prefer the plane to reach its destination as quickly as possible.
Thus, an efficient privacy-preserving planning approach is required.
The proposed approach can be applied to create efficient plans for passenger transport.

\subsection{Rover exploration}
This domain models Mars exploration rovers.
Each rover can be thought of as an agent.
The goal of these rovers is to collect samples.
Each rover has its own private sets of targets and reachable locations.
These targets and reachable locations can be thought of as private facts in our planning model,
the privacy of which must be preserved.

Each rover collects samples in its reachable locations.
When a rover needs to transmit the samples it has collected to another rover,
these samples may have to be transmitted by intermediate rovers in the interim, 
as the number of locations reachable by each rover is limited.
Since samples may decay as time progresses,
it is desirable for the rovers to transmit the samples to the destination as quickly as possible.
Hence, an efficient privacy-preserving planning approach is required.
The proposed approach can be applied to create efficient plans for sample transmission.

In summary, our approach can be applied to all of the planning problems,
in which each party has private information and
local plans can be created by each party using reinforcement learning techniques.
Moreover, reinforcement learning has a broad range of applications, 
including task scheduling in cloud computing \cite{Peng15}, traffic light control \cite{Arel10}, and robot coordination \cite{Kober13}.
Since most of these applications may also have privacy requirements,
our method has the potential to be applied to these real-world scheduling and coordination problems as well.

\tianqing{can we summarize the scenarios that can use our method? }
\dayong{I add a paragraph to summarize this.}

\section{Experiments}\label{sec:experiment}
\subsection{Experimental setup}
The experiments in the present research are conducted based on two scenarios: logistics and packet routing,
which are typical logistic-like problems.
In the logistics scenario, as described in Section \ref{sec:motivation example},
each military base has a set of packages to transport to other military bases.
These military bases may be located in different areas and managed by different military units.
The information pertaining to each military base is private to the managing military unit.

The packet routing scenario is similar to the logistics scenario,
in that each node in an ad hoc network houses a set of packets to be sent to other nodes.
Nodes may belong to different groups and are served by different access points.
The information of each node is private to the serving access point.
The key difference between these two scenarios is that in the packet routing scenario,
new nodes may dynamically join the network
and existing nodes may leave the network at any time, 
while this is not the case for the logistics scenario.
These experiments have also been conducted on the air travel and rover scenarios.
As the results present a similar trend to logistics,
they are not discussed here.

Three evaluation metrics are used in the two scenarios:

1) average route length: the average length of the routes from initial states to goal states;

2) average communication overhead: the average number of communication messages used to make a plan;

3) success rate: the ratio of the number of the successfully transmitted packages/packets to the total number of packages/packets.

In both scenarios, the map shape or network topology is similar to that in Fig. \ref{fig:example}.
The size of the maps/neworks varies from $10$ logistic centers/access points to $50$ logistic centers/access points; 
correspondingly the number of military bases/network nodes varies from $50$ to $250$ 
\footnote{The topologies of maps/networks are created by simulation, 
as most real-world graph datasets \cite{CA,PA,TX} do not contain distance information and thus cannot be used in our experiments. 
We leave the experiments with real-world datasets as one of our future studies.}.
The probability of a package/packet being generated on each military base/node is set to $0.2$.
The communication budget of each agent varies from $C=40$ to $C=80$ depending on variations in the map/network size.
The privacy budget of each agent is set to $\epsilon=0.5$.
Moreover, in the packet routing scenario, during the route finding process,
there is a probability of $0.1$ that an existing node will leave the network and a probability of $0.1$ that a new node will join the network.
The parameter values in the proposed algorithms are chosen experimentally,
and set to $\alpha=0.1$, $\gamma=0.9$ and $\zeta=0.95$.

The proposed planning approach, denoted as \emph{DP-based}, is evaluated in comparison with three closely related approaches. 
The first approach, denoted as \emph{No-privacy}, is also developed by us.
The major features of \emph{No-privacy} are the same as \emph{DP-based}, 
but the privacy-preserving mechanism has been removed. 
Although \emph{No-privacy} is not applicable to privacy-preserving planning, it can be used to evaluate 
how the privacy-preserving mechanism impacts the performance of our \emph{DP-based} approach.
The second approach is based on best-first forward search, denoted as \emph{Best-first},
and has been used in \cite{Nissim14,Brafman15,Stolba17}.
In the \emph{Best-first} approach, when an agent transmits a package/packet to a logistic center/access point,
the agent broadcasts this state to all the other agents.
The nearest agent takes the package/packet based on this state
and transmits it to the next logistic center/access point.
This process continues until the goal agent is reached.
The third approach is GPPP (greedy privacy-preserving planner), denoted as \emph{Greedy},
which was developed in \cite{Maliah17}.
The \emph{Greedy} approach consists of two phases: global planning and local planning.
In the global planning phase, all agents collaboratively devise a global plan using a best-first search method.
Next, in the local planning phase, each agent creates a local plan by executing a single-agent planning procedure.

\subsection{Experimental results}
\subsubsection{The logistics scenario}
\begin{figure}[ht]
\centering
	\begin{minipage}{0.48\textwidth}
   \subfigure[\scriptsize{Average route length}]{
    \includegraphics[width=0.32\textwidth, height=2.5cm]{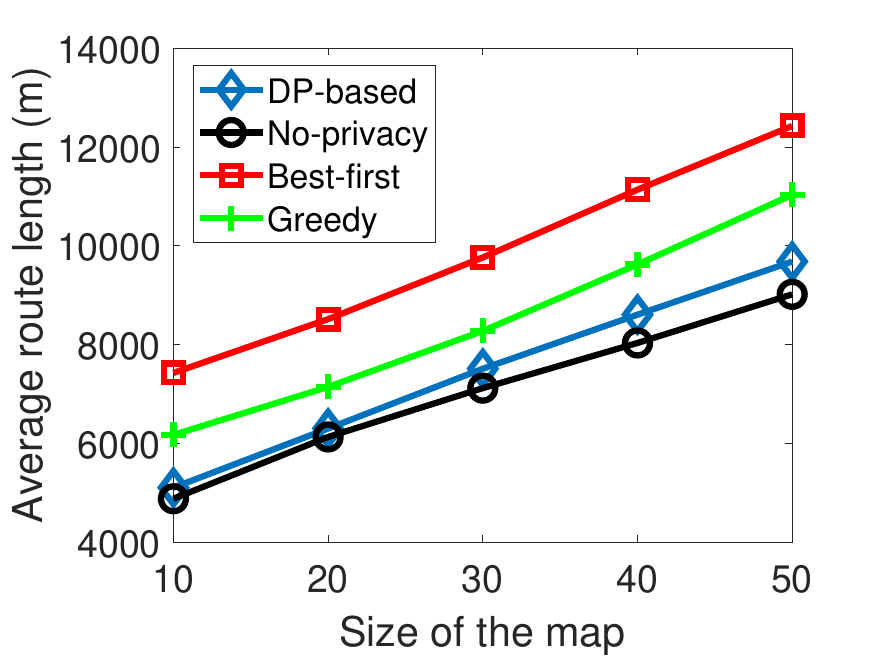}
			\label{fig:S1Route}}
    \subfigure[\scriptsize{Average communication overhead}]{
    \includegraphics[width=0.32\textwidth, height=2.5cm]{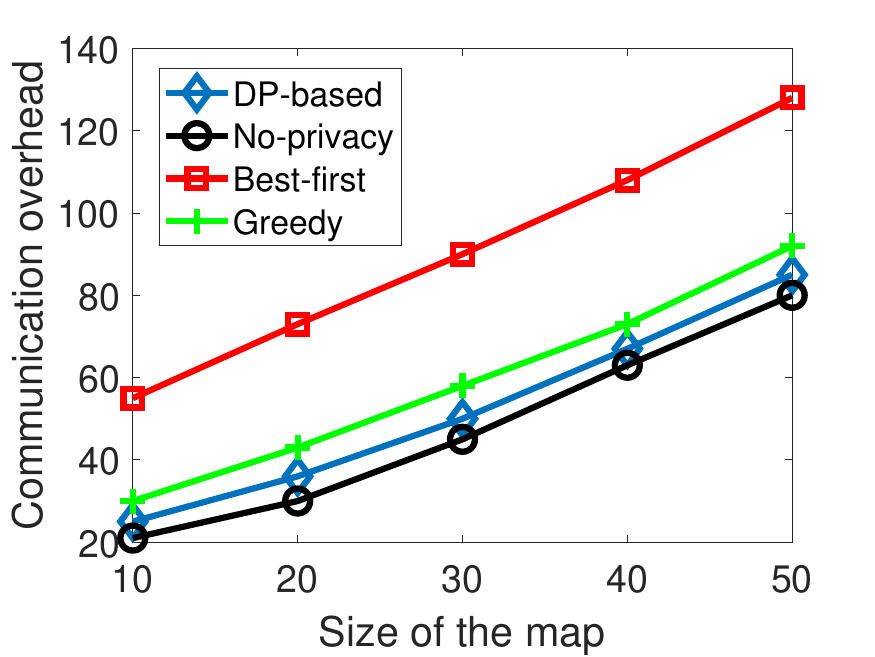}
			\label{fig:S1Communication}}
    \subfigure[\scriptsize{Success rate}]{
    \includegraphics[width=0.32\textwidth, height=2.5cm]{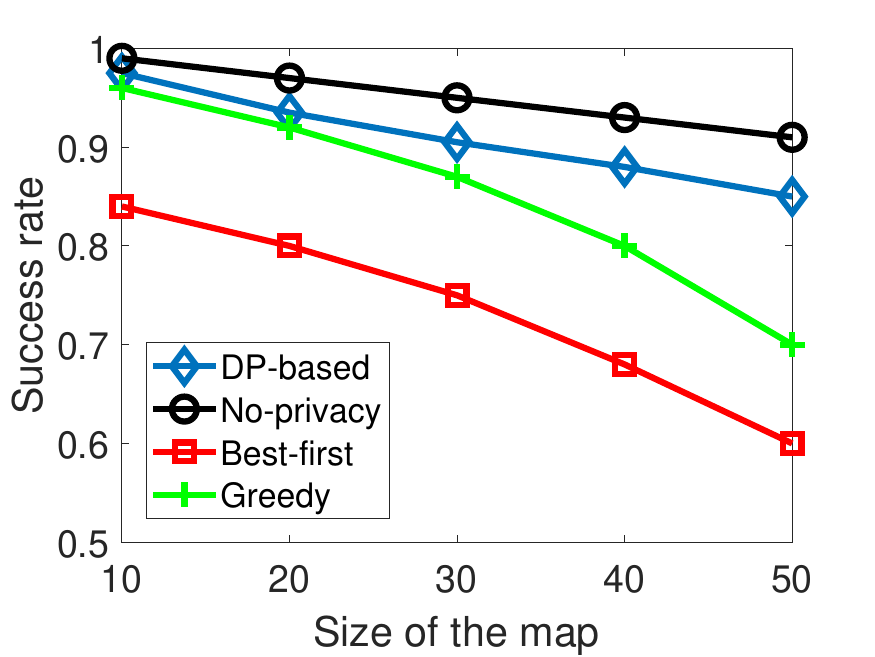}
			\label{fig:S1Success}}\\
    \end{minipage}
	\caption{Performance of the four approaches on the logistics scenario with variation of the map size}
	\label{fig:S1}
\end{figure}

Fig. \ref{fig:S1} demonstrates the performance of the four approaches on the logistics scenario with variation of the map size.
As the map size grows larger, for all four approaches,
the average route length and the average communication overhead progressively increase,
while the success rate gradually decreases.

As the map size increases, the distance between an original agent and a destination agent may be enlarged accordingly.
Therefore, the average route length increases.
Moreover, when this occurs,
the number of intermediate agents also increases.
Thus, the average communication overhead rises as well.
Due to this increase in the average communication overhead,
the communication budget of some agents may be used up before a plan is made.
Hence, the success rate reduces.

The proposed \emph{DP-based} approach achieves much better performance than the \emph{Best-first} and \emph{Greedy} approaches.
The reinforcement learning algorithm in the \emph{DP-based} approach can find shorter routes than the other two approaches.
Moreover, in the \emph{DP-based} approach, agents are allowed to communicate only with neighbors,
and a privacy budget is adopted to control communication overhead.
Thus, the \emph{DP-based} approach uses less communication overhead than the other two approaches.
In addition, the \emph{DP-based} approach successfully makes more plans than the other two approaches
before the communication budget is used up. 
Overall, the performance of \emph{No-privacy} approach is slightly better than the \emph{DP-based} approach. 
As privacy is not taken into account in the \emph{No-privacy} approach, 
the information shared between agents is accurate, 
and agents can make accurate plans based on this accurate information. 
However, the private information of each agent is entirely disclosed to other agents under this approach, 
a situation that should be avoided in real-world applications. 
More specifically, in Fig. \ref{fig:S1Route}, the average route length in the \emph{DP-based} approach is only about $2\%$ longer than for the \emph{No-privacy} approach. 
This is because in the \emph{DP-based} approach, a plan is made up of 
a set of local plans created by the initial agent and the intermediate agents. 
Each of these local plans is created by an individual agent with reference to its private but accurate information. 
Since most of the information used to create a plan is accurate, 
the introduction of our privacy-preserving mechanism does not substantially impact the average route length. 

The \emph{Best-first} approach achieves the worst performance out of the four approaches.
In the \emph{Best-first} approach, a package is transmitted to the nearest agent.
However, in large and complex maps, the nearest agent may not always be the best choice.
Moreover, always choosing the nearest agent may result in a transmission loop;
if this situation arises, packages will never reach their destinations.
In comparison, the performance of \emph{Greedy} approach is better than the \emph{Best-first} approach,
as the \emph{Greedy} approach features a global planning phase that involves selecting the appropriate logistic centers to create a high-level route,
which conserves communication overhead.

\begin{figure}[ht]
\centering
	\begin{minipage}{0.48\textwidth}
   \subfigure[\scriptsize{Average route length}]{
    \includegraphics[width=0.32\textwidth, height=2.5cm]{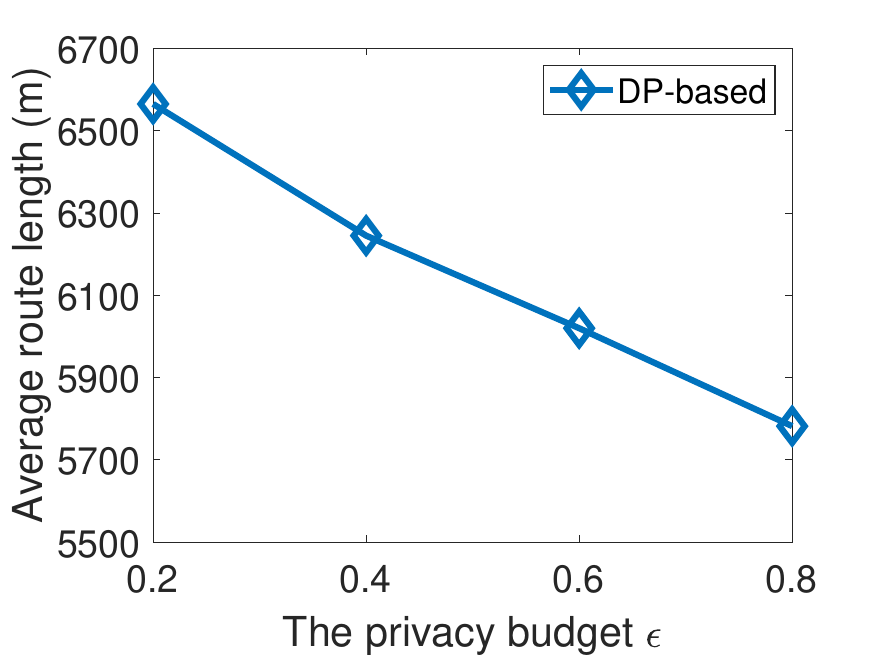}
			\label{fig:S1PriRoute}}
    \subfigure[\scriptsize{Average communication overhead}]{
    \includegraphics[width=0.32\textwidth, height=2.5cm]{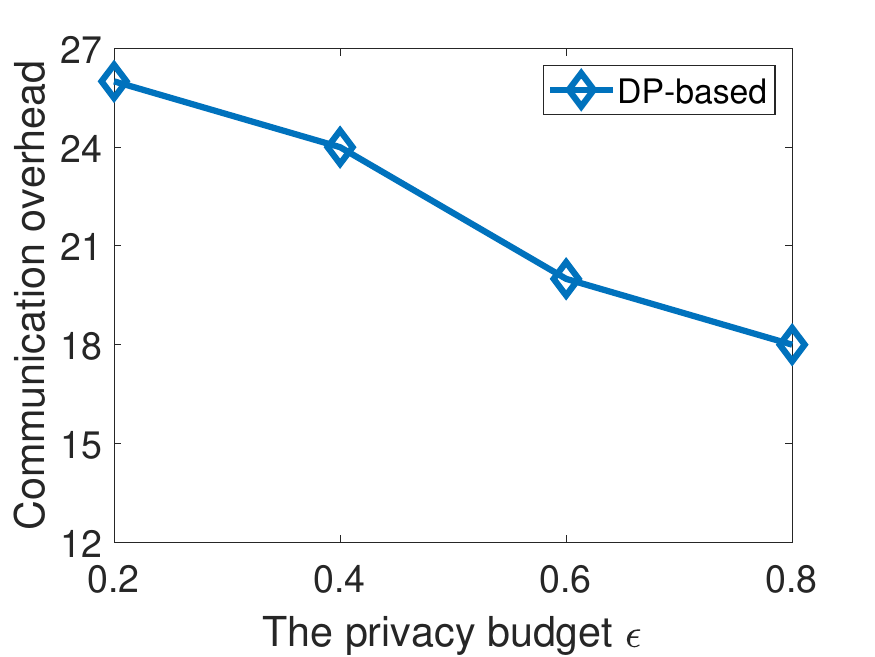}
			\label{fig:S1PriComm}}
    \subfigure[\scriptsize{Success rate}]{
    \includegraphics[width=0.32\textwidth, height=2.5cm]{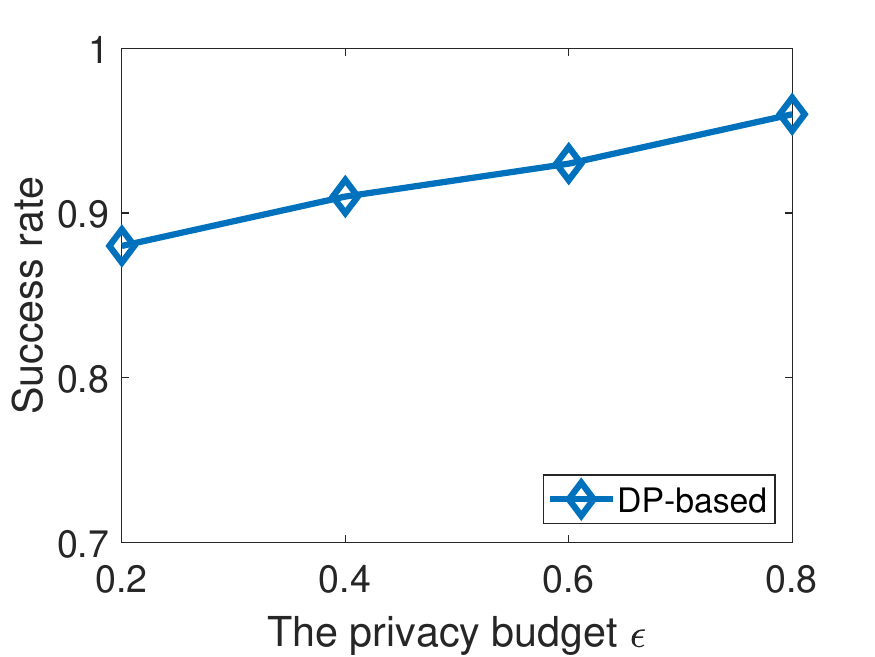}
			\label{fig:S1PriSuccess}}\\
    \end{minipage}
	\caption{Performance of the \emph{DP-based} approach on the logistics scenario with variation of the privacy budget value}
	\label{fig:S1Pri}
\end{figure}

Fig. \ref{fig:S1Pri} demonstrates the performance of the \emph{DP-based} approach on the logistics scenario 
with variation of the privacy budget $\epsilon$ value from $0.2$ to $0.8$. 
The number of logistic centers is fixed at $10$.
It can be seen that with the increase of the privacy budget $\epsilon$ value,
the performance of the \emph{DP-based} approach improves,
namely it achieves a shorter average route length (Fig. \ref{fig:S1PriRoute}),
lower average communication overhead (Fig. \ref{fig:S1PriComm}),
and higher success rate (Fig. \ref{fig:S1PriSuccess}).
According to the Laplace mechanism, when the $\epsilon$ value is small,
the noise, added to the map, is large.
A large noise value will significantly affect the agents planning.
For example, agent $a$ has two neighbors $b$ and $c$.
Now, suppose that 1) agent $a$ wants to send a package to $d$,
and 2) delegating the package to $b$ is a better choice than $c$.
However, when agents $b$ and $c$ obfuscate their maps,
due to the large noise, the obfuscation results may make $c$ appear to be a better choice than $b$.
Thus, agent $a$ may make a sub-optimal plan.
This situation is alleviated
when the $\epsilon$ value increases.

\subsubsection{The packet routing scenario}
\begin{figure}[ht]
\centering
	\begin{minipage}{0.48\textwidth}
   \subfigure[\scriptsize{Average route length}]{
    \includegraphics[width=0.32\textwidth, height=2.5cm]{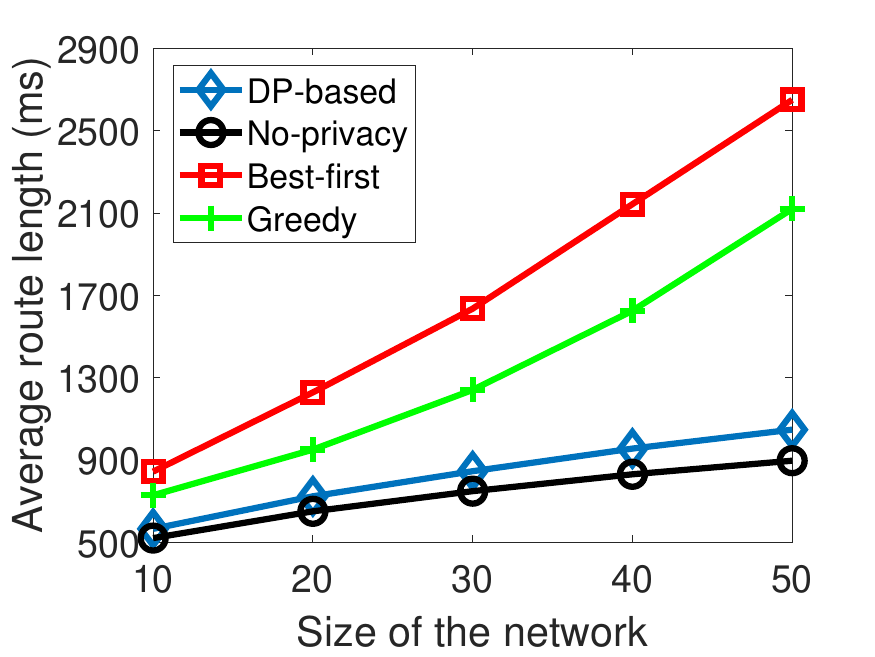}
			\label{fig:S2Route}}
    \subfigure[\scriptsize{Average communication overhead}]{
    \includegraphics[width=0.32\textwidth, height=2.5cm]{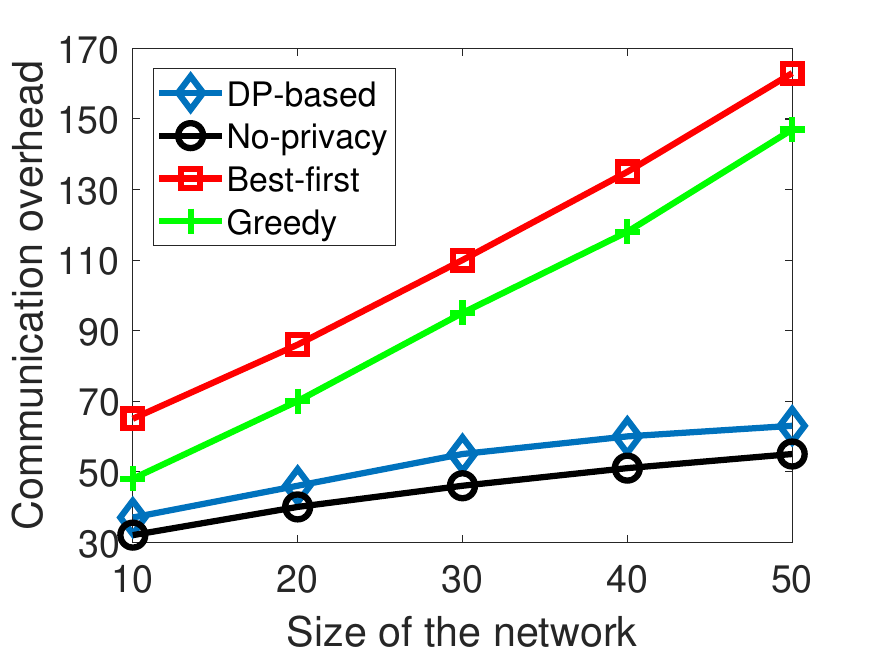}
			\label{fig:S2Communication}}
    \subfigure[\scriptsize{Success rate}]{
    \includegraphics[width=0.32\textwidth, height=2.5cm]{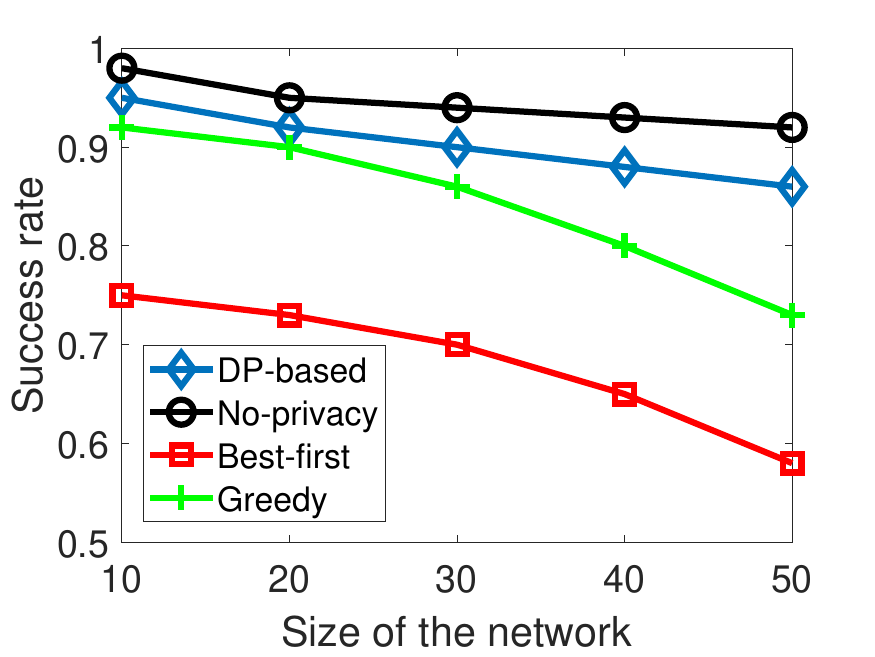}
			\label{fig:S2Success}}\\
    \end{minipage}
	\caption{Performance of the four approaches on the packet routing scenario with variation of the network size}
	\label{fig:S2}
\end{figure}

\begin{figure}[ht]
\centering
	\begin{minipage}{0.48\textwidth}
   \subfigure[\scriptsize{Average route length}]{
    \includegraphics[width=0.32\textwidth, height=2.5cm]{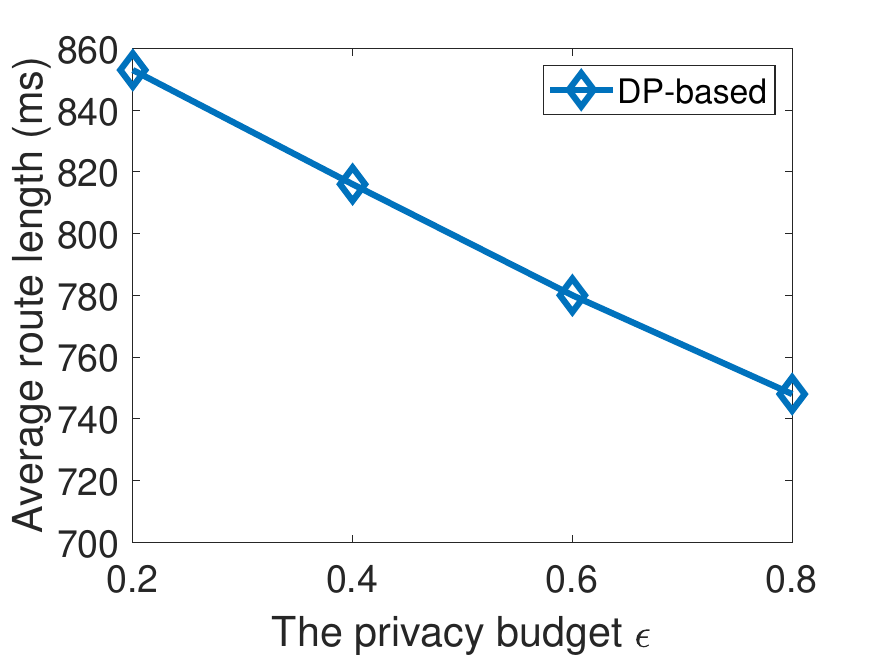}
			\label{fig:S2PriRoute}}
    \subfigure[\scriptsize{Average communication overhead}]{
    \includegraphics[width=0.32\textwidth, height=2.5cm]{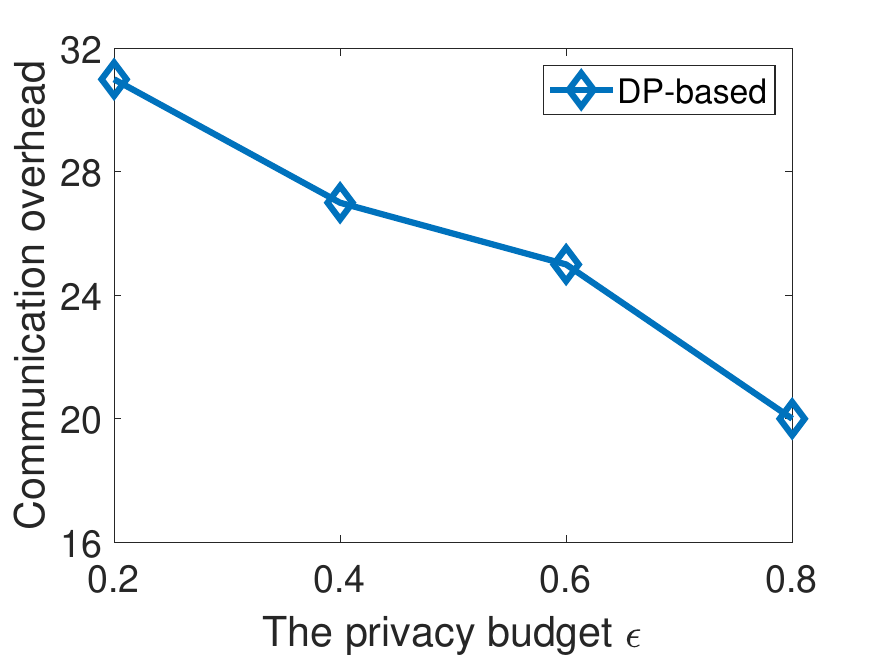}
			\label{fig:S2PriComm}}
    \subfigure[\scriptsize{Success rate}]{
    \includegraphics[width=0.32\textwidth, height=2.5cm]{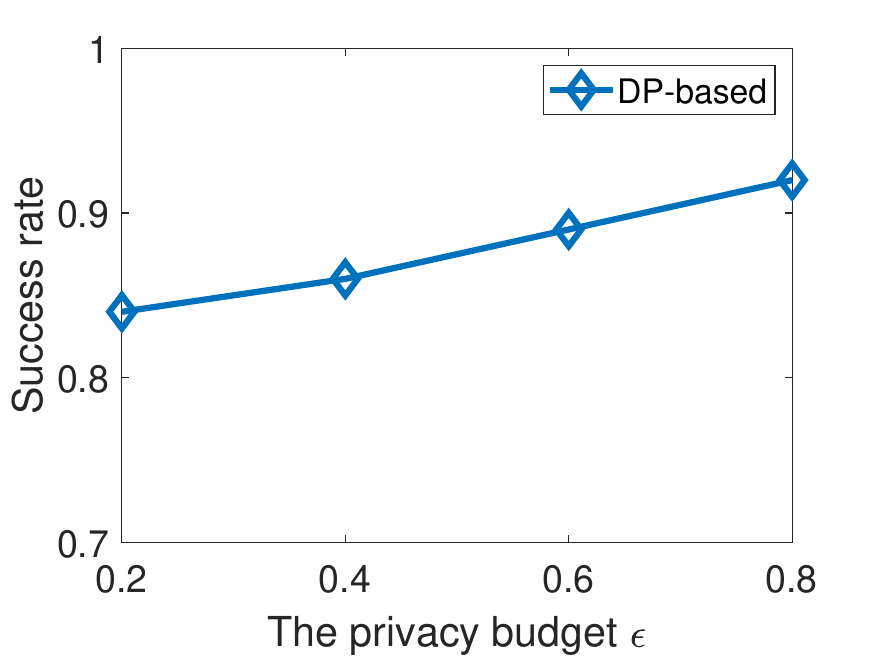}
			\label{fig:S2PriSuccess}}\\
    \end{minipage}
	\caption{Performance of the \emph{DP-based} approach on the packet routing scenario with variation of the privacy budget}
	\label{fig:S2Pri}
\end{figure}

Fig. \ref{fig:S2} illustrates the performance of the four approaches on the packet routing scenario with variation of the network size,
while Fig. \ref{fig:S2Pri} depicts the performance of the \emph{DP-based} approach on the packet routing scenario with variation of the privacy budget $\epsilon$.
After comparing Fig. \ref{fig:S1} to Fig. \ref{fig:S2} and Fig. \ref{fig:S1Pri} to Fig. \ref{fig:S2Pri},
it can be concluded that these approaches exhibit similar trends in terms of their results on the two scenarios,
but that the performance of these approaches is worse on the packet routing scenario than on the logistic scenario.
This is mainly due to the dynamism of the packet routing scenario.
When a node leaves the network, the routes involving that node are broken.
Thus, agents have to re-find routes.
This incurs extra communication overhead and reduces success rates to some extent.

\begin{figure}[ht]
\centering
	\begin{minipage}{0.48\textwidth}
   \subfigure[\scriptsize{Average route length}]{
    \includegraphics[width=0.32\textwidth, height=2.5cm]{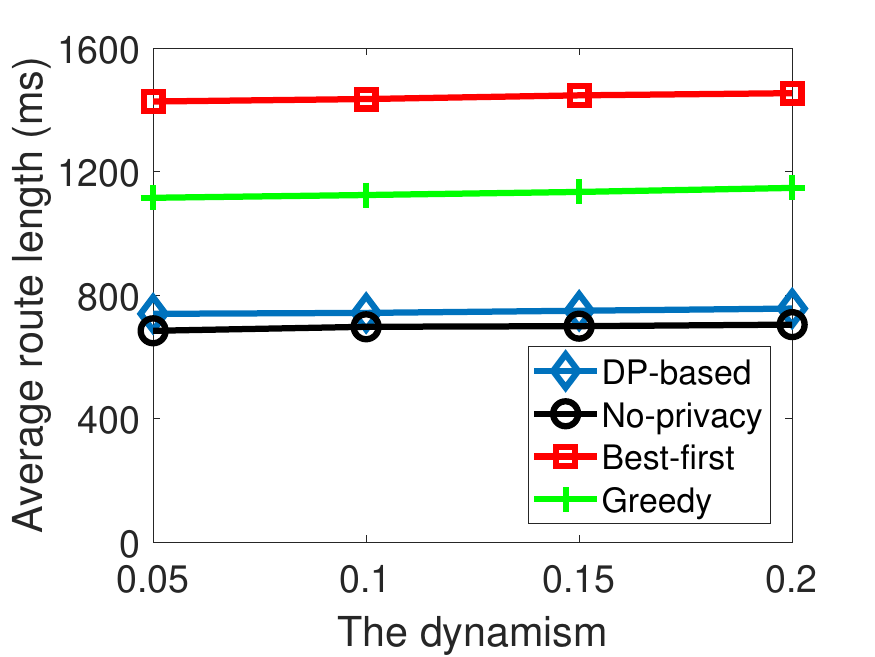}
			\label{fig:S2DyRoute}}
    \subfigure[\scriptsize{Average communication overhead}]{
    \includegraphics[width=0.32\textwidth, height=2.5cm]{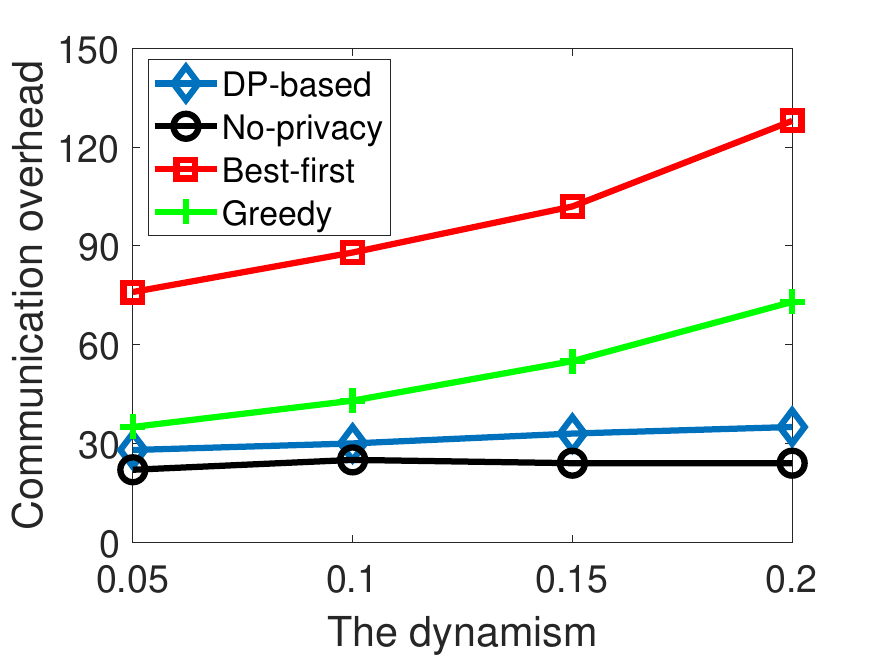}
			\label{fig:S2DyCommunication}}
    \subfigure[\scriptsize{Success rate}]{
    \includegraphics[width=0.32\textwidth, height=2.5cm]{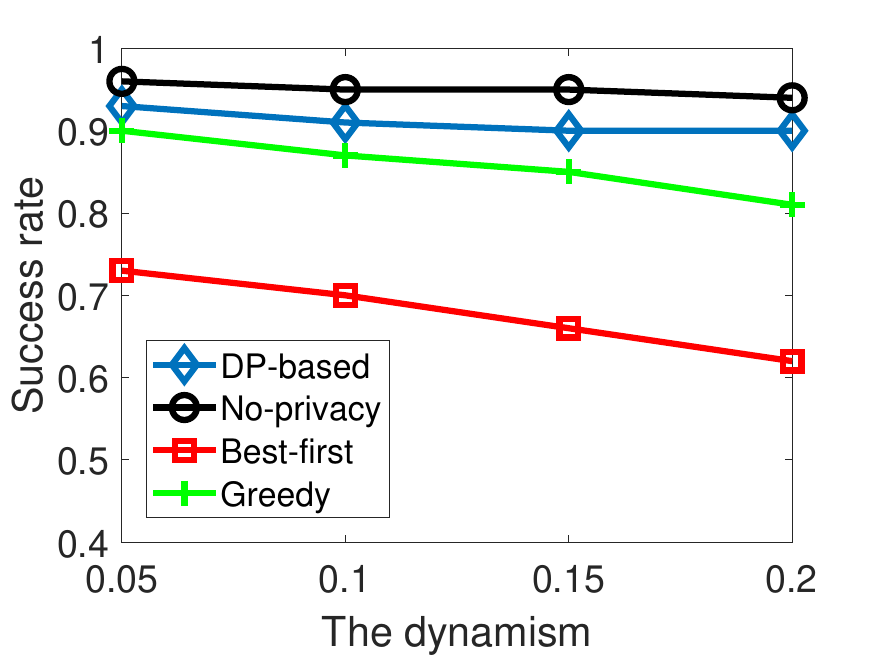}
			\label{fig:S2DySuccess}}\\
    \end{minipage}
	\caption{Performance of the three approaches on the packet routing scenario with variation of the dynamism}
	\label{fig:S2Dy}
\end{figure}

Fig. \ref{fig:S2Dy} illustrates the performance of the four approaches on the packet routing scenario with variation of the dynamism,
such that the probability of a node leaving or joining the network varies from $0.05$ to $0.2$ and the network size is fixed at $10$ access points.
From Fig. \ref{fig:S2Dy}, it can be seen that an increase in the dynamism negatively affects the \emph{Best-first} and \emph{Greedy} approaches in terms of their average communication overhead and success rates,
but does not significantly impact the \emph{DP-based} and \emph{No-privacy} approaches.

As the dynamism increases, the frequency with which nodes leave or join the network also increases.
Thus, the number of affected routes increases as well.
In the \emph{Best-first} and \emph{Greedy} approaches, when a route is broken,
a new finding process is launched.
This may not significantly affect the average route length (Fig. \ref{fig:S2DyRoute}),
as route length depends on the positions of nodes rather than the number of nodes.
However, launching a new finding process results in additional communication overhead,
and may thus reduce success rates due to depletion of the communication budget.
By contrast, the \emph{DP-based} and \emph{No-privacy} approaches do not require a new finding process
when a route is broken.
In the \emph{DP-based} approach, routes are found by using reinforcement learning on obfuscated local network topologies.
These obfuscated local network topologies are obtained using differential privacy.
Differential privacy can guarantee that
a node being brought in or out of a local network will have minimum effect on the statistical information.
Therefore, when a node leaves or joins a local network,
the serving access point does not need to re-obfuscate the new network
or to communicate with the original access point about the change in the network.
Hence, the communication budget can be conserved,
and the success rate is preserved.

\subsection{Summary}
According to the experimental results, the proposed \emph{DP-based} approach achieves better results than the \emph{Best-first} and \emph{Greedy} approaches in all experimental situations considered here.
The average length of routes found by the \emph{DP-based} approach is about $25\%$ and $15\%$ shorter, respectively, than those found using the \emph{Best-first} and \emph{Greedy} approaches.
The \emph{DP-based} approach also uses about $20\%$ and $10\%$ less communication overhead than the \emph{Best-first} and \emph{Greedy} approaches, respectively.
Moreover, the \emph{DP-based} approach achieves about $10\%$ and $5\%$ higher success rates than the \emph{Best-first} and \emph{Greedy} approaches, respectively. 

Regarding performance, the \emph{DP-based} approach is slightly worse than the \emph{No-privacy} approach 
by a factor of about $3\%$ in terms of average route length, $2\%$ in communication overhead and $2\%$ in success rate. 
The \emph{DP-based} approach, however, strongly protects the privacy of agents, 
which is entirely disregarded in the \emph{No-privacy} approach.
Therefore, based on the experimental results, the efficiency of our \emph{DP-based} approach can be proven.

\section{Conclusion and Future Work}\label{sec:conclusion}
This paper proposes a novel strong privacy-preserving planning approach for logistic-like problems.
In this approach, an agent creates a complete plan by using obfuscated private information from each intermediate agent, 
where this obfuscation is achieved by adopting the differential privacy technique.
Due to the advantages of differential privacy,
following obfuscation, an agent's private information cannot be deduced by other agents regardless of their reasoning power.
\tianqing{please use two sentences to summrize the proposed method}
\dayong{done}
This approach is the first in existence to achieve strong privacy, completeness and efficiency simultaneously by taking advantage of differential privacy.
Moreover, this approach is communication-efficient. 
Compared to the benchmark approaches, our approach achieves better performance in various aspects.
\tianqing{the proposed method can be applied in ******}
\dayong{done}

In the future, we intend to extend our approach by introducing malicious agents.
Existing approaches commonly assume that agents are honest but curious.
Introducing malicious agents, which provide false information to others, 
may be a challenging and interesting addition to the field of multi-agent planning. 
Also, as described in the experimental part (Section \ref{sec:experiment}), 
we will continue to search usable real-world datasets and evaluate our approach with them.


\bibliographystyle{IEEEtran}
{\small \bibliography{references}}

\end{document}